\documentclass{article}
\PassOptionsToPackage{numbers, compress}{natbib}
\usepackage[nonanonymous, preprint]{neurips_2026}

\usepackage[utf8]{inputenc} %
\usepackage[T1]{fontenc}    %
\usepackage{hyperref}       %
\usepackage{url}            %
\usepackage{booktabs}       %
\usepackage{amsfonts}       %
\usepackage{nicefrac}       %
\usepackage{microtype}      %
\usepackage{xcolor}         %
\usepackage{bm}
\usepackage{algorithm} 
\usepackage{algpseudocode}
\usepackage{amsmath}  
\allowdisplaybreaks
\usepackage{amssymb}     
\usepackage{amsthm}
\usepackage[capitalise]{cleveref}
\usepackage{xspace}
\usepackage[inline]{enumitem}
\usepackage{stmaryrd}
\usepackage{multirow}
\usepackage{soul}
\usepackage{makecell}

\usepackage{tablefootnote}
\usepackage{caption}

\usepackage{tikz}
\usetikzlibrary{fit,calc}

\usepackage[dvipsnames]{xcolor}

\newcolumntype{L}{>{$}l<{$}}

\input{macros}

\title{\hetalgname: Sketch-Aggregated Federated Low-Rank Adaptation with Heterogeneous Client Ranks}

\author{%
  Yue Xia%
    \\
  Technical University of Munich\\
  Munich, 80333 Germany \\
  \texttt{yue1.xia@tum.de} \\
  \And
  Tayyebeh Jahani-Nezhad \\
  Technische Universität Berlin \\
  Berlin, 10587 Germany \\
  \texttt{t.jahani.nezhad@tu-berlin.de} \\
  \And
  Mayank Bakshi \\
  Northern Arizona University \\
  Flagstaff, AZ 86011 USA \\
  \texttt{mayank.bakshi@nau.edu} \\
  \And
  Rawad Bitar \\
  Technical University of Munich\\
  Munich, 80333 Germany \\
  \texttt{rawad.bitar@tum.de} \\
}

\begin{document}

\maketitle

\begin{abstract}
    We consider federated parameter efficient fine-tuning of large neural networks with low-rank adaptation (LoRA,~Hu et al.\ 2022). Combining LoRA with federated PEFT introduces challenges absent from either setting alone: clients may use different LoRA ranks, making their factor matrices dimension-incompatible, and factor-wise averaging suffers from a \emph{bilinear mismatch}. We propose \hetalgname, a sketch-aggregated federated LoRA algorithm in which each client transmits a linear sketch of its local updates, enabling direct aggregation at the federator. As a result, \hetalgname alleviates the bilinear mismatch, and allows for aggregation in a small subspace of the full model. We introduce a rank-homogeneous version called \homalgname which allows for direct adapter aggregation in this setting. We prove convergence to a neighborhood of the first-order stationary point at rate $\cO(1/T)$ for the rank-homogeneous setting. Numerical experiments on fine-tuning RoBERTa-Large on GLUE datasets show how our algorithms outperform the state-of-the-art.
\end{abstract}

\section{Introduction}
Foundation models such as GPT-4~\cite{achiam2023gpt}, BERT~\cite{devlin2019bert}, PaLM 2~\cite{anil2023palm}, Llama 2~\cite{touvron2023llama}, Claude 3~\cite{claude3}, and ViT~\cite{dosovitskiy2020image} have remarkable learning capabilities and performance across a diverse range of tasks~\cite{bill2023fine,dong2023towards,kelly2023bing,singhal2023large}. Adapting such foundation models to downstream tasks relies on the availability of vast, potentially heterogeneous datasets and requires tremendous computational resources, especially when fine-tuning all model parameters. Since fine-tuning all model parameters requires substantial computational and memory resources, parameter-efficient fine-tuning (PEFT) methods that adapt a pretrained model by updating only a small, structured subset of parameters, are employed, see \cite{houlsbyParameterEfficientTransferLearning2019,heUnifiedViewParameterEfficient2021,liPrefixTuningOptimizingContinuous2021,lesterPowerScaleParameterEfficient2021,zakenBitFitSimpleParameterefficient2022,liaoParameterEfficientFineTuningIntroducing2023,hu2022lora,liuDoRAWeightDecomposedLowRank2024,tianHydraLoRAAsymmetricLoRA2025,zhang2023adalora,valipour2023dylora, hanParameterEfficientFineTuningLarge2024}. %

Training on large amounts of distributed data can be achieved using
Federated learning (FL)~\cite{mcmahan2017communication}. FL is a distributed learning paradigm in which a central server (federator) coordinates the training of a neural network on heterogeneous data generated and owned by participating clients. The clients keep their data, train the neural network locally, and share only their model updates with a central server. As such, FL enhances data privacy by keeping raw data local, reduces communication and storage costs associated with transferring data to a central server, and scales well due to its ease of implementation at the client. As such, FL has attracted significant attention from the scientific community and is being applied across several domains~\cite{kairouz2021advances,jiang2025federated,li2021survey}.

Combining PEFT with FL %
span adapter-based methods \cite{cai2023efficient,ghiasvand2024communication}, prompt tuning \cite{zhao2023fedprompt,qiu2023text}, selective parameter optimization \cite{yu2023bridging}, and low-rank gradient subspace optimization~\cite{peng2026rethinking}. We focus on federated PEFT methods using Low-Rank Adaptation (LoRA)~\cite{hu2022lora}, which has emerged as one of the most popular PEFT techniques, e.g., \cite{tianHydraLoRAAsymmetricLoRA2025,zhang2023adalora,hayou2024loraplus,liu2024dora,zhou2025loradrop,chen2024llamalora,kim2025lora} and used in FL, e.g., \cite{zhang2023fedit,liu2026flexlora,liu2026rethinking,cho2024heterogeneous,chen2024rolora,guo2025selective,sun2024improving}. A detailed literature review is provided in Section\ref{sec:related}.

Combining LoRA with FL introduces challenges that do not arise in either centralized LoRA or conventional FL alone. The main reason is that, unlike standard FL, LoRA updates cannot be averaged directly. In LoRA, for each layer of the network, each client $i$ computes an update of the form $\Update_i = \B_i\A_i$, where the model update $\Update_i \in \mathbb{R}^{d\times s}$ is represented as a multiplication of two factor matrices called \emph{adapters} $\B_i \in \mathbb{R}^{d \times r}$, $\A_i \in \mathbb{R}^{r \times s}$, for a chosen $r \ll \min(d,s)$. This factorization, together with the choice of $r$, reduces the number of trainable parameters per layer from $ds$ to $(d+s)r$ and restricts the rank of $\Update_i$ to $r$ \footnote{We omit the layer index as the operation is the same for all layers. At the end, the clients concatenate its adapters across all layers and send them to the server.}. In LoRA-based FL, the server aims to obtain $\Update \defeq \sum_i p_i \Update_i = \sum_{i}p_i\B_i \A_i$, where $0\leq p_i\leq 1$ and $\sum_ip_i = 1$. Transmitting $\Update_i$ to the server is generally undesirable, as it incurs the same communication cost as transmitting a full fine-tuning update. 
Instead, the clients can transmit $\B_i$ and $\A_i$ separately, and the server can aggregate them. The server has two options: \begin{enumerate*}[label=\emph{(\roman*)}]
    \item compute each $\Update_i=\B_i\A_i$'s and average them to obtain $\Update$;
    or \item average the $\A_i$'s and $\B_i$'s first to obtain $\widebar{\A}=\sum_i p_i \A_i$ and $\widebar{\B}=\sum_i p_i \B_i$ and compute $\widebar{\Update} = \widebar{\B} \widebar{\A}$.
\end{enumerate*}
Each approach introduces its own challenges, affecting either the computational efficiency or the accuracy of the aggregated update, which in turn influences the convergence of the federated fine-tuning algorithm. We focus on the following challenges.

\paragraph{Challenge 1: Rank-heterogeneity} In the rank-heterogeneous setting, clients use different adapter sizes $r_i$ to match their local compute budgets or the complexity of their data, resulting in matrices with incompatible dimensions in aggregation. Allowing for rank heterogeneity is crucial. In the original LoRA work~\cite{hu2022lora}, it is shown that small adapter ranks can be sufficient for certain tasks. However, this may not always work, especially when the downstream task differs significantly from the pretraining task. Potential solutions include computing and averaging the $\model_i$'s or zero-padding the matrices to ensure dimension compatibility, both of which introduce approximation errors. %

\paragraph{Challenge 2: Bilinear mismatch} Zero-padding and averaging the adapters first has a lower computational cost, but it does not recover the correct average of the updates, leading to the \emph{bilinear mismatch} problem (also known as aggregation bias): $\widebar{\B}\widebar{\A} \neq \sum_i p_i \B_i \A_i$. The bilinear mismatch problem is exacerbated when considering differential privacy guarantees~\cite{liu2026rethinking} and the non-linearity of the aggregation does not allow for secure aggregation, typically used in privacy-preserving FL~\cite{kairouz2021advances}. 

\paragraph{Closely related work} We summarize the closest work on rank-heterogeneous LoRA-based federated PEFT here, provide more details in \cref{sec:related_agg} and give an extensive review in Section~\ref{sec:related}.

FedIT~\cite{zhang2023fedit} introduced the use of LoRA without accounting for rank-heterogeneity and the bilinear mismatch. Since then, federated LoRA methods aim at addressing these challenges through various trade-offs.
In FLoRA~\cite{wang2024flora}, the server stacks the adapters, forming larger matrices, and sends them to the clients. This increases the server-to-client communication cost, but allows the clients to compute the full update. A key difference is that the backbone model is not frozen. Instead, the clients update the backbone model at every iteration and keep re-initializing their adapters.
To overcome those limitations, in FlexLora~\cite{bai2024federated} the server computes and averages the updates $\model_i=\B_i\A_i$'s to obtain $\Update$ and applies SVD on $\Update$ and truncates it to a rank-$r_i$ update matrix sent to client $i$. The computational overhead per layer at the server is $\cO(Ndrs)$ for computing the $\model_i$'s and $\cO(d^2 s)$ for the SVD. %
In~\cite{cho2024heterogeneous}, the authors remark that operating in the full model space, i.e., as is done in FlexLoRA, loses information on the cross-relation across clients and only retains the knowledge on the cross-relation between the adapters $\B_i$ and $\A_i$. This observation is corroborated in numerical experiments showing the underwhelming performance of such methods. Therefore, in HetLoRA~\cite{cho2024heterogeneous}, the federator zero-pads the clients' $\A_i$'s and $\B_i$'s and aggregates them using a careful weighted average. %
The computational complexity of this method is $\cO(Ndrs)$ for computing the weights and the global update. Nevertheless, this method does not mitigate the bilinear mismatch problem. 
In FSLoRA~\cite{fang2025federated} the training objective is modified. Sketching matrices are embedded into the forward pass as follows. A global rank $r$ is fixed, and random diagonal $r\times r$ matrices with only $r_i$ non-zero entries are chosen as sketching matrices called $\mathbf{S}_i$. Altering the update to $\B_i\mathbf{S}_i\A_i$ ensures that client $i$ trains only $r_i$ dimensions. 

To our knowledge, there is no rank-heterogeneous LoRA-based federated PEFT that mitigates the bilinear mismatch and does not operate in the full model space. Besides, only FSLoRA~\cite{fang2025federated} provides theoretical insights into the convergence of the rank-heterogeneous algorithm for non-convex loss functions. However, this analysis comes at the cost of fixing the global matrix and choosing a random subspace to train per iteration, which reduces the flexibility of allowing the clients to choose their own subspaces to train on. A convergence analysis for FLoRA is given for strongly convex loss functions, which does not match typical neural network optimization constraints.

\paragraph{Our contributions}
We introduce \hetalgname, a rank-agnostic federated LoRA-based PEFT method that accounts for rank heterogeneity, alleviates the bilinear mismatch, does not operate in the full model space, and enables linear aggregation at the federator. Instead of transmitting $\A_i$ and $\B_i$, each client sketches its local update $\Update_i$ into two compact, fixed-size matrices using the sketching method proposed in~\cite{tropp2017practical}. Since the sketching operator in~\cite{tropp2017practical} is linear, aggregating the sketches at the server is equivalent to computing the sketch of the aggregated update  $\Update$ exactly, thereby overcoming the bilinear mismatch problem with low computational overhead. 
The main properties and advantages of \hetalgname can be summarized as follows: 

\begin{enumerate}
    \item \hetalgname aggregates fixed-dimensional sketches, irrespective of clients' local ranks. Importantly, it can be seamlessly combined with adaptive rank-truncation at the clients. 
    \item We give a rank-homogeneous version called \homalgname and provide a convergence guarantee on the original LoRA objective. \homalgname converges at rate $\mathcal{O}(1/T+c)$, where $c$ is a sketching-dependent error floor. We bound this error using the spectral tail energy of the aggregated client updates.
    \item Numerical experiments fine-tuning RoBERTa-Large on GLUE datasets in the rank-heterogeneous and rank-homogeneous settings show that \hetalgname and \homalgname outperform existing methods. Interestingly, in some settings FlexLoRA and \hetalgname have similar accuracies, despite the difference in computation cost at the server. Similarly, FedIT and \homalgname have comparable accuracies in some settings. However, FedIT does not allow for rank heterogeneity.
    \item Although this work is not directly concerned with privacy guarantees, mitigating the bilinear mismatch and allowing for linear aggregation open the door for secure aggregation and more efficient differential privacy mechanisms that we will explore in future work.
\end{enumerate}

\begin{table}[t]
\centering
\caption{Comparison of rank-heterogeneous federated LoRA aggregation methods.
We report the computation cost at the server in the regime $d \gg N + r$. The parameters are: sketch dimension by  $k = \cO(r)$, number of clients
$N$, number of rounds $T$, local LoRA rank $r_i$, and sketching-ratio-adjusted smoothness $\widetilde{L}$ and $c$ is a non-negative constant.} 
\label{tab:comparison}
\small
\setlength{\tabcolsep}{5pt}
\renewcommand{\arraystretch}{1.1}
\begin{tabular}{@{}l c c c c c@{}}
\toprule
\thead{Method}
    & \thead{Bilinear \\ mismatch}
    & \thead{Aggregation \\ in full space}
    & \thead{Server compute \\ cost}
    & \thead{Server-client \\ communication cost}
    & \thead{Convergence$^\star$ \\ rate} \\
\midrule
HetLoRA~\cite{cho2024heterogeneous}
    & {\color{red!85!black} Yes} %
    & {\color{green!50!black} No} & $\cO(Nrds)$ 
    & $r_i(d+s)$ 
    & --- \\
FlexLoRA~\cite{bai2024federated}
    & {\color{green!50!black} No} %
    & {\color{red!85!black} Yes} 
    & $\cO(N(d+s)+d^2 s)$ 
    & $r_i(d+s)$ 
    & --- \\
FLoRA~\cite{wang2024flora}
    & {\color{green!50!black} No} %
    & {\color{red!85!black} Yes} 
    & $\cO(1)$ 
    & $ \sum_{i}r_i(d+s)$ 
    & $\cO(1/T)$ \\
FSLoRA~\cite{fang2025federated}
    & {\color{red!85!black} Yes} %
    & {\color{green!50!black} No} 
    & $\cO(Nr(d{+}s))$
    & $r(d+s)$
    & $\cO(\widetilde{L}/\sqrt{NT})$ \\
\midrule
\hetalgname (\textbf{Ours})
    & {\color{green!50!black} No} 
    & {\color{green!50!black} No} 
    & $\cO\big((N{+}r)\,r\,(d{+}s)\big)$
    & $dr+ks$
    & $\cO(1/T + c)$ \\
\bottomrule
\end{tabular}
\captionsetup{justification=justified,parskip=1pt}
\vspace{-0.1in}
\caption*{\raggedright \footnotesize $\star$ FLoRA guarantees hold for strongly convex losses, FSLoRA guarantees hold for the modified training, and our guarantees hold for the rank-homogeneous setting. } 
\end{table}

\section{Setup and Problem Formulation}
In this section, we formally define the setting and present the mathematical formulation of the problem. In the sequel, we provide a detailed review of the literature, summarize our algorithm, and present a formal statement of our main results.
\subsection{Notation}
\label{sec:notation}
For an integer $N$, let $[N]={1,\cdots, N}$. Vectors and matrices are denoted by bold lowercase and uppercase letters, respectively. For a matrix $\mathbf{A}$ with rank $r$, we denote its singular values as $\sigma_1(\mathbf{A}) \geq \cdots \geq \sigma_r(\mathbf{A}) \geq 0$. Let $\| \mathbf{A} \|_2 = \sigma_1(\mathbf{A})$ denote the spectral norm and $\| \mathbf{A} \|_F = \sqrt{\sum_{i=1}^r \sigma_i (\mathbf{A})^2}$ the Frobenius norm. For a matrix $\A$, $\A_{:,1:r}$ and $\A_{1:r,:}$ denote the submatrices formed by its first $r$ columns and first $r$ rows, respectively, and $\A_{1:r,1:r}$ denotes its leading $r\times r$ submatrix.
For a tuple of matrices $\TupleB = (\B_{1}, \cdots, \B_{L})$, we let $\| \TupleB \|_2 \triangleq \max_{l \in [L]}  \| \B_{l} \|_2$, $\| \TupleB \|_F \triangleq \sqrt{\sum_l \| \B_{l} \|_F^2}$, and we say that $\TupleB$ has rank $r$ if $\rank(\B_{l}) \leq r$ for every $l \in [L]$. The transpose of a tuple $\TupleB$ is denoted by $\TupleB^\top \triangleq ((\B_{1})^\top, \cdots, (\B_{L})^\top)$. The product and addition of two tuples of matrices $\TupleA$ and $\TupleB$ are defined as $\TupleB \TupleA = (\B_{1}\A_{1}, \cdots,  \B_{L}\A_{L})$ and $\TupleB + \TupleA = (\B_{1} + \A_{1}, \cdots,  \B_{L} + \A_{L})$, respectively. The inner product of two tuples $\TupleA$ and $\TupleB$ is $\langle \TupleA, \TupleB \rangle \triangleq \sum_{l=1}^L \langle \A_{l}, \B_{l} \rangle = \sum_{l=1}^L \mathrm{tr}({(\A_{l})}^\top \B_{l})$. 

\subsection{Setup}
\paragraph{Federated learning setup}
We consider an FL system consisting of a central server (the federator) and $N$ clients, indexed by $i \in [N]$. Each client $i \in [N]$ holds a local dataset $\Dataset_i \subseteq \mathcal{X} \times \mathcal{Y}$, with $|\Dataset_i|$ being the number of data samples held by client $i$. Let $\Dataset=\bigcup_{i \in [N]} \Dataset_i$ denote the global dataset and let $(\bm{x},y) \in \Dataset_i$ denote a single data sample in the dataset $\Dataset_i$, where $\bm{x}$ is the feature vector and $y$ is the label. Let $f(\model;\bm{x}): \mathcal{W} \times \mathcal{X} \to \mathcal{Y}$ denote the output of a neural network parameterized by $\model \in \mathcal{W}$ for an input $\bm{x}$. Let  $\ell: \mathcal{Y}\times \mathcal{Y} \to \mathbb{R}$ be the sample-wise loss function that measures the difference between the predicted label $f(\model;\bm{x})$ and the true label $y$. The local empirical risk $\Loss_i(\model)$ for client $i$ is defined as
\begin{align*}
    \Loss_i(\model) \triangleq \frac{1}{|\Dataset_i|} \sum_{(\bm{x},y) \in \Dataset_i} \ell(f(\model;\bm{x}),y),
\end{align*}
and the global empirical risk is defined as 
$\Loss(\model) \triangleq \sum_{i=1}^{N} p_i \Loss_i(\model)$, 
where the weights $p_i$ satisfy $0\leq p_i\leq 1$ and $\sum_{i=1}^{N} p_i = 1$, and are typically chosen proportional to the local dataset sizes, i.e., $p_i = \frac{|\Dataset_i|}{\sum_{i=1}^N |\Dataset_i|}$.
The goal of FL is to collaboratively learn a model $\model^\star$ that minimizes the global empirical risk, i.e., find $\model^\star \in \argmin_{\model \in \mathcal{W}} \Loss(\model)$. 

\paragraph{Fine-tuning and LoRA}
To fine-tune the pre-trained model for a downstream task, we focus on a subset of the parameters. Let a tuple of matrices $\TupleModel = (\model_{1}, \cdots, \model_{L})$ denote the $L$  linear layers selected for fine-tuning, where for each layer $\model_{l} \in \mathbb{R}^{d_l \times s_l}$. Let $\TupleModel_0 = (\model_{0,1}, \cdots, \model_{0,L})$ denote the pre-trained values of $\TupleModel$ and $\TupleUpdate = (\Delta \model_{1}, \cdots, \Delta \model_{L})$ be the change of $\TupleModel$ during fine-tuning, such that $\TupleModel = \TupleModel_0 + \TupleUpdate$.

In LoRA, $\TupleModel_0$ is frozen, i.e., it does not receive gradient updates during fine-tuning.  For each layer $l \in [L]$, the weight change $\Delta \model_{l}$ is defined to be a low-rank matrix, represented by a product of two factor matrices of smaller dimensions with a maximum rank $r_l \ll \min( d_l,s_l )$, i.e.,
\begin{align*}
    \model_{l} = \model_{0,l} + \Delta \model_{l} = \model_{0,l} + \B_{l} \A_{l}, 
\end{align*}
where $\Delta \model_{l} \in \mathbb{R}^{d_l \times s_l}$, $\B_{l} \in \mathbb{R}^{d_l \times r_l}$, and $\A_{l} \in \mathbb{R}^{r_l \times s_l}$. Following standard practice~\cite{hu2022lora}, for each layer $l$, we initialize $\A_{l}$ with random Gaussian entries and $\B_{l}$ as the zero matrix, ensuring $\Delta \model_{l} = \mathbf{0}$ at initialization. We denote the tuple of low-rank factor matrices for all the layers as $\TupleB = (\B_{1}, \cdots, \B_{L})$ and $\TupleA = (\A_{1}, \cdots, \A_{L})$.

\paragraph{Federated LoRA}
In the federated LoRA setting, each client $i$ performs local LoRA fine-tuning using its local tuples of low-rank matrices $\TupleB_i =(\B_{i,1},\cdots, \B_{i,L})$ and $\TupleA_i = (\A_{i,1},\cdots, \A_{i,L})$, where the local rank profile $\mathbf{r}_i = (r_{i,1},\cdots,r_{i,L})$ is specific to each client $i$ and may vary across iterations due to heterogeneity in data, computational and memory resources, and the relative importance of different layers during training.
A global rank profile $\mathbf{r}=(r_1,\cdots,r_L)$ is defined such that $\displaystyle r_{l} = \max_{i \in [N]} r_{i,l}$ for all $l\in[L]$.
We define the local empirical loss for client $i$ evaluated at the local parameters $(\TupleB_i, \TupleA_i)$ as: %
\begin{align*}
    \Loss_i^{\text{lora}} (\TupleB_i, \TupleA_i; \Dataset_i) \triangleq \frac{1}{|\Dataset_i|} \sum_{(\bm{x},y) \in \Dataset_i} \ell(f(\TupleModel_0 + \TupleB_i \TupleA_i;\bm{x}),y).
\end{align*}

Let the global empirical loss for full fine-tuning over the general update tuple $\TupleUpdate$ be  
\begin{align*}
    \Loss^\text{full}(\TupleUpdate; \Dataset) \triangleq \frac{1}{|\mathcal{D}|}\sum_{(\bm{x},y) \in \mathcal{D}} \ell(f(\TupleModel_0 + \TupleUpdate;\bm{x}),y).
\end{align*}
Under the LoRA factorization, the global empirical loss evaluated at the global parameters $(\TupleB,\TupleA)$ is equivalent to the weighted sum of the local empirical losses at $(\TupleB,\TupleA)$:
\begin{align*}
    \Loss^\text{lora}(\TupleB,\TupleA;\Dataset) \triangleq \Loss^\text{full}(\TupleUpdate;\Dataset) = \sum_{i=1}^{N} p_i \Loss_i^{\text{lora}} (\TupleB, \TupleA;\Dataset_i),
\end{align*}

To minimize the global objective $\Loss^{\text{lora}} (\TupleB, \TupleA)$, in each communication round $t \in \{ 0, \cdots, T-1 \}$, the federator broadcasts the current global parameters $(\TupleB^{(t)},\TupleA^{(t)})$ to the clients. Each client $i \in [N]$ initializes its local parameters as $(\TupleB_{i}^{(t,0)}, \TupleA_{i}^{(t,0)})=(\TupleB^{(t)}, \TupleA^{(t)})$, and performs $E$ local steps of mini-batch stochastic gradient descent (SGD). We denote by $(\TupleB_{i}^{(t,e)}, \TupleA_{i}^{(t,e)})$ the local parameters at step $e \in \{0, \cdots, E-1\}$ during communication round $t$.

At each local step $e$, client $i$ samples a mini-batch of data $\xi_{i}^{(e)} \subset \Dataset_i$ and computes the stochastic gradient of the local objective. The local update rules are given by:
\begin{align*}
    \TupleB_{i}^{(t,e+1)} &= \TupleB_{i}^{(t,e)} - \eta^{(t)} \nabla_{\TupleB} \Loss_{i}^{\text{lora}}(\TupleB_{i}^{(t,e)}, \TupleA_{i}^{(t,e)}; \xi_{i}^{(e)}), &
    \TupleA_{i}^{(t,e+1)} &= \TupleA_{i}^{(t,e)} - \eta^{(t)} \nabla_{\TupleA} \Loss_{i}^{\text{lora}}(\TupleB_{i}^{(t,e)}, \TupleA_{i}^{(t,e)}; \xi_{i}^{(e)}),
\end{align*}
where $\eta^{(t)} > 0$ is the learning rate at round $t$. %
After $E$ local steps, the resulting updated local parameters $\TupleB_i^{(t+1)} = \TupleB_{i}^{(t,E)}$ and $\TupleA_i^{(t+1)} = \TupleA_{i}^{(t,E)}$ are sent to the federator for aggregation.

\subsection{Problem formulation}\label{sec:related_agg}
\paragraph{Challenges of LoRA with FedAvg} Upon receiving the clients' $\TupleB_i^{(t+1)}$ and $\TupleA_i^{(t+1)}$, the federator needs to compute a global $\TupleB^{(t+1)}$ and $\TupleA^{(t+1)}$. In rank-homogeneous settings where all clients have the same rank for each layer, the federator could use vanilla FedAvg as in FedIT~\cite{zhang2023fedit}, i.e., averaging the $\TupleB_i^{(t+1)}$ and $\TupleA_i^{(t+1)}$ and sending them back to the clients. However, this introduces additional errors in the global model update due to the bilinear mismatch, i.e., since $\TupleUpdate^{(t+1)} \defeq \big(\sum_{i\in [N]} p_i \TupleB_i^{(t+1)}\big)\big( \sum_{i\in [N]}p_i\TupleA_i^{(t+1)}\big) \neq \widebar{\TupleUpdate}^{(t+1)} \defeq \sum_{i\in [N]}p_i \TupleB_i^{(t+1)} \TupleA_i^{(t+1)}$. In rank-heterogeneous settings, the adapters have different dimensions and averaging is not directly possible. Hence, an alternative solution is to operate in the full model space. In~\cite{wang2024flora}, the federator horizontally concatenates the $\TupleB_i^{(t+1)}$ matrices  and vertically concatenates the $\TupleA_i^{(t+1)}$ matrices to form two large matrices sent to the clients. Each client computes $\widebar{\TupleUpdate}^{(t+1)}$ locally and applies it to the backbone model $\TupleModel_0$, thereby updating the backbone parameters. The main drawbacks are an additional communication overhead and not freezing the backbone model. 
Hence, in FlexLoRA~\cite{liu2026flexlora}, the federator computes $\widebar{\TupleUpdate}^{(t+1)}$ at the server and, using SVD, truncates it to fit each client's local rank profile. 
However, as discussed and corroborated numerically in~\cite{cho2024heterogeneous}, computing and averaging the $\widebar{\TupleUpdate}_i = \TupleB_i\TupleA_i$ loses information on the cross-relation across clients and only retains the knowledge on the cross-relation between the adapters $\TupleB_i$ and $\TupleA_i$. 
Thus, in HetLoRA, the federator zero-pads the $\TupleB_i^{(t+1)}$ and $\TupleA_i^{(t+1)}$ to ensure they have the same dimension, aggregates them to obtain $\TupleUpdate^{(t+1)}$ by determining the weights $p_i$'s according to the norm of the singular value vector of $\TupleUpdate_i^{(t+1)}$. Then, similarly to FlexLoRA, the federator uses SVD to truncate $\TupleUpdate^{(t+1)}$ to the client's local rank profile. On top of the computation overhead at the federator introduced by computing the $\TupleUpdate_i^{(t+1)}$, this method does not resolve the bilinear mismatch problem. Furthermore, zero-padding the matrices is not always faithful to the rank representation of the clients, which is why a careful computation of the $p_i$'s is needed and why other works, e.g., \cite{byun2025towards,ha2026rb} replace zero-padding with column repetition.

Therefore, the question that remains unanswered is:
\begin{center}
    \emph{\textbf{Research question:} Is it possible to construct a rank-heterogeneous LoRA-based FL method that mitigates the bilinear mismatch and does not operate in the full model space?}
\end{center}
We answer this question in the affirmative as we explain next.

\section{Our Algorithm and Contributions}
We jointly tackle rank heterogeneity and bilinear mismatch without operating in the full model space. The main ingredient of our algorithm is operating in a sketched-space of the full model using the linear matrix sketching method introduced in~\cite{tropp2017practical}. 
For ease of presentation, we describe the method for a single layer. The same procedure is applied independently to every layer, and the resulting matrices will be concatenated at the end. Therefore, we omit the layer index $l$. Each client sketches (compresses) its local update $\Update_i \in \mathbb{R}^{d\times s}$ into two matrices, $\Y_i\in \mathbb{R}^{d\times r}$ and $\Z_i \in \mathbb{R}^{k\times s}$, where $r = r_l = \max_{i\in [N]} r_{i,l}$ is the maximum local rank for each layer and $k$ is a constant satisfying $k>r+1$. By aggregating the $\Y_i$ and $\Z_i$ sent by the clients, the federator  
obtains sketches of the global update $\Update=\sum_{i \in [N]} p_i \Update_i$, from which it computes a low-rank approximation of $\Update$, as shown in \cref{alg:low_rank_app}. Therefore, this maps local adapters of rank $r_i$ to sketches of dimension $r$ and $k$, respectively, thereby enabling linear aggregation and bypassing the bilinear mismatch while avoiding the aggregation in the full space $\Update$. Similar to FSLoRA, this incurs an additional communication overhead proportional to $ (r-r_i)$ and $(k-r_i)$. %
Before presenting our theoretical guarantees and demonstrating that our method outperforms the state of the art, we first explain the sketching method introduced in~\cite{tropp2017practical}.%

\paragraph{Sketching as Low-Rank Approximation~\cite{tropp2017practical}} 
Suppose $\model \in \mathbb{R}^{d \times s}$ is an arbitrary matrix. Let $r \ll \min\{ d,s\}$ be the target rank. Given a sketch parameter $k$, let $\mOme \in \mathbb{R}^{s \times r}$ and $\mPsi \in \mathbb{R}^{k \times d}$ be independent random matrices whose entries are drawn independently from the standard normal distribution, two \emph{sketch matrices} of $\model$ are produced via left and right matrix multiplication as:
\begin{align*}
    \Y \triangleq \model \mOme \quad \text{and} \quad \Z \triangleq \mPsi \model.
\end{align*}

Given the sketch matrices, $\Y\in \mathbb{R}^{d \times r}$, and $\Z\in \mathbb{R}^{k \times s}$, the low-rank approximation $\widehat \model$ of $\model$ with rank $r$ is computed using the following steps:
\begin{enumerate*}[label=\emph{(\roman*)}]
    \item factorize $\Y$ into $\Y=\Bh \R$ via QR decomposition;
    \item compute $\mPsi \Bh$ and find the QR decomposition $(\mPsi \Bh)= \mathbf{U} \mathbf{T}$;
    \item compute $\Ah = \mathbf{T}^{-1} (\mathbf{U}^T \Z)$; and,
    \item produce the low-rank approximation $\widehat{ \model} = \Bh \Ah$.
\end{enumerate*}
This sketch-inversion procedure is summarized in \cref{alg:low_rank_app}.

\begin{algorithm}[bt]
\caption{Sketch-inversion Algorithm}
\label{alg:low_rank_app}
\begin{algorithmic}[1]
\Require Given two random matrices $\mPsi \in \mathbb{R}^{k \times d} $ and $\mOme \in \mathbb{R}^{s \times r}$ drawn from a standard normal distribution, and two matrices $\Y \in \mathbb{R}^{d \times r}$ and $\Z \in \mathbb{R}^{k \times s}$, where $\Y = \model \mOme $ and $\Z = \mPsi \model$ are two sketch matrices produced from the target large matrix $\model \in \mathbb{R}^{d \times s}$.
\Ensure Return the low-rank factors $\Bh \in \mathbb{R}^{d \times r}$ and $\Ah \in \mathbb{R}^{r \times s}$ of the target matrix $\model$.
\Function{Unsketching}{$\Y,\Z, \mPsi,\mOme$}
\State $(\Bh, \sim) \gets \mathrm{QR}(\Y)$
\State $(\mathbf{U}, \mathbf{T}) \gets \mathrm{QR}(\mPsi \Bh)$
\State $\Ah = \mathbf{T}^{-1} (\mathbf{U}^T \Z)$
\State \textbf{Return} $(\Bh, \Ah)$ 
\EndFunction
\end{algorithmic}
\end{algorithm}

\begin{algorithm}[!bt]
\caption{\hetalgname} %
\label{alg:hetsefora}
\begin{algorithmic}[1]
\Require The pretrained model $\model_0$. Total rounds $T$, local steps $E$, and local rank updating threshold $\tau$. In total, $N$ clients, each client $i$ holding their local dataset $\Dataset_i$. The aggregation weights $p_i = |\Dataset_i|/\sum_{i=1}^N |\Dataset_i|$. The local rank profiles $\mathbf{r}_i^{(0)} = (r_{i,1}^{(0)},\cdots,r_{i,L}^{(0)})$. For $l \in [L]$, set $r_l \triangleq \max_{i\in [N]}r_{i,l}^{(0)}$ and $k_l > r_l+1$. Set $r_{\max}\triangleq \max_{l \in [L]} r_l$, and $k_{\max}\triangleq \max_{l \in [L]} k_l$. Initialize global matrices $\B_l^{(0)} \in \mathbb{R}^{d \times r_l}$ as a full zero matrix and $\A_l^{(0)} \in \mathbb{R}^{r_l \times s}$ as a random Gaussian matrix. $\TupleB^{(0)} = (\B_1^{(0)}, \cdots, \B_L^{(0)})$ and $\TupleA^{(0)} = ( \A_1^{(0)}, \cdots, \A_L^{(0)})$. A function $\Call{Truncate}{\B, \A, r_i}$ that truncates LoRA factors with rank $r$ to rank $r_i \leq r$ (\cref{alg:truncation}), a function $\Call{UpdateLocalRank}{\B_i, \A_i, \tau}$ that updates the client's local rank $r_i$ and truncate the factors to rank $r_i$(\cref{alg:update_local_rank}), and a function $\Call{Unsketching}{\Y,\Z, \mPsi,\mOme}$ that unsketch the sketch matrices (\cref{alg:low_rank_app}).

\For{round $t = 0, 1, \cdots, T-1$}
\State Federator initializes $\mPsi^{(t+1)} \in \mathbb{R}^{k_{\max} \times d} $ and $\mOme^{(t+1)} \in \mathbb{R}^{s \times r_{\max}}$ from a standard normal distribution, and send the shared seed that generates $\mPsi^{(t+1)}$ and $\mOme^{(t+1)}$ to all clients. 

\State Federator broadcasts $\TupleB^{(t)}$ and $\TupleA^{(t)}$ to all clients.
\ForAll{clients $i$ \textbf{in parallel}}
    \ForAll{adapter layer $l \in [L]$}
        \State Extract $\mOme_l^{(t+1)} \!\!= \!\mOme^{(t+1)}_{:,1:r_l} \!\in\!\mathbb R^{s\times r_l}$ and $\mPsi_l^{(t+1)} \!\!= \!\mPsi^{(t+1)}_{1:k_l,:} \!\in\!\mathbb R^{k_l \times d}$ from the shared seed.
        \State Obtain $\B_l^{(t)}$ and $\A_l^{(t)}$.
\State Initialize local LoRA factors as:
   \Statex \hspace{2.15cm} \textbf{if} Updating local ranks \textbf{then}
    \Statex \hspace{2.15cm} \hspace*{1em} Set local rank $r_{i,l}^{(t)}=r_{i,l}^{(0)}$ 
    \Statex \hspace{2.15cm} \textbf{end if}
    
    \Statex \hspace{2.15cm} \textbf{if} local rank $r_{i,l}^{(t)} < r_l$ \textbf{then}
    \Statex \hspace{2.15cm} \hspace*{1em} $(\B_{i,l}^{(t)}, \A_{i,l}^{(t)}) \gets \Call{Truncate}{\B_l^{(t)}, \A_l^{(t)}, r_{i,l}^{(t)}}$  
   \Statex  \hspace{2.15cm} \textbf{else}
   \Statex \hspace{2.15cm} \hspace*{1em} Set $\B_{i,l}^{(t)}=\B_l^{(t)}$ and $\A_{i,l}^{(t)}=\A_l^{(t)}$ 
   \Statex \hspace{2.15cm} \textbf{end if} 
\EndFor
\State Using $\B_{i,l}^{(t)}$ and $\A_{i,l}^{(t)}$, perform $E$ local steps to obtain $\B_{i,l}^{(t+1)}$ and $\A_{i,l}^{(t+1)}$ for all $l \in [L]$.
\ForAll{adapter layer $l \in [L]$}
\State Rank update:
\vspace{0.1cm}
\Statex \hspace{2.15cm} 
    \textbf{if} Updating local ranks \textbf{then} 
    \Statex \hspace{2.15cm} \hspace*{1em} $(\B_{i,l}^{(t+1)}, \A_{i,l}^{(t+1)}), r_{i,l}^{(t+1)} \gets %
    \Call{UpdateLocalRank}{\B_{i,l}^{(t+1)}, \A_{i,l}^{(t+1)}, \tau}$
    \Statex \hspace{2.15cm} \textbf{end if} 

\State Compute the sketch matrices
\vspace{-0.1cm}
    \begin{align*}
        \Y^{(t+1)}_{i,l}&= \Update^{(t+1)}_{i,l} \mOme_l^{(t+1)} = \B_{i,l}^{(t+1)} (\A_{i,l}^{(t+1)} \mOme_l^{(t+1)}) \in \mathbb{R}^{d \times r_l}, \\
        \Z^{(t+1)}_{i,l}&=\mPsi_l^{(t+1)} \Update^{(t+1)}_{i,l} = (\mPsi_l^{(t+1)} \B_{i,l}^{(t+1)}) \A_{i,l}^{(t+1)} \in \mathbb{R}^{k_l \times s}.
    \end{align*}
    \EndFor
    \State Send $\TupleY^{(t+1)}_{i} = ( \Y^{(t+1)}_{i,1}, \cdots, \Y^{(t+1)}_{i,L} )$ and $\TupleZ^{(t+1)}_{i} = ( \Z^{(t+1)}_{i,1}, \cdots, \Z^{(t+1)}_{i,L} )$ to federator.
    \EndFor
    \ForAll{adapter layer $l \in [L]$}
    \State Federator aggregates $\Y_l^{(t+1)} =\sum_{i \in [N]} p_i \Y^{(t+1)}_{i,l} \; \text{ and }\; 
            \Z_l^{(t+1)} = \sum_{i \in [N]} p_i \Z^{(t+1)}_{i,l}$.
    \State $(\B_l^{(t+1)}, \A_l^{(t+1)}) \gets \Call{Unsketching}{\Y_l^{(t+1)}, \Z_l^{(t+1)}, \mPsi_l^{(t+1)}, \mOme_l^{(t+1)}}$
    \EndFor
\EndFor
    \State \textbf{Return} $(\TupleB^{(T)}, \TupleA^{(T)})$
\end{algorithmic}
\end{algorithm}

\subsection{\hetalgname}

\hetalgname resolves rank-heterogeneity and  bilinear mismatch as follows. 

\paragraph{Client rank} Each client sets its own local rank profile $\mathbf{r}_i = (r_{i,1},\cdots,r_{i,L})$. \hetalgname is agnostic to how the clients adjust their local ranks. They may use adaptive techniques such as those proposed in~\cite{yan2026fedsrd,wu2026adaptive}, adapting the ranks based on their computational and memory resources as well as the contribution of each layer to fine-tuning in terms of effective rank.
Since the local rank may change across rounds and may vary across layers, we formally denote it by $r_{i,l}^{(t)}$. 

\paragraph{Client rank update} Clients can periodically update the rank of their adapters, e.g., every $T_u$ rounds, to match their computational and memory constraints, and data complexity. As a concrete example, in \hetalgname, we choose the rank update method shown in~\cref{alg:update_local_rank}. This method can be replaced by other rank-update strategies and is chosen since it allows the local ranks to grow as well as shrink.

\begin{remark}
    We note that adaptive LoRA rank allocation has been extensively studied in the centralized LoRA setting, e.g., \cite{zhang2023adalora,valipour2023dylora,ding2023sparse,liu2026flexlora}. \cref{alg:update_local_rank} is one option among many to adjust client's local ranks based on data complexity. Exploring alternative rank-update mechanisms is an interesting direction for future work.
\end{remark}

\subsubsection{Algorithm procedure}
The procedure is summarized in \cref{alg:hetsefora} and is explained in detail next. For notational simplicity, we present the algorithm under the assumption that all adapted layers share the same input and output dimensions, i.e., $d_l=d$ and $s_l=s$. The method extends directly to heterogeneous dimensions by using compatible sizes.

\paragraph{Initialization} The federator has the pretrained model $\TupleModel_0$. Each client sets an initial local rank profile $\mathbf{r}_i^{(0)} = (r_{i,1}^{(0)},\cdots,r_{i,L}^{(0)})$ based on local compute and memory constraints, i.e., $r_{i,l}^{(0)}$ is the maximum rank that client $i$ can allocate to layer $l$. This profile contains the maximum rank each client can allocate per layer and will be communicated to the federator. The federator creates a global rank profile $\mathbf{r}^{(0)} = (r_1^{(0)},\cdots,r_L^{(0)})$ where $r_l^{(0)} = \max_{i\in [N]}r_{i,l}^{(0)}$ for all $l\in [L]$. The rank profiles are fixed throughout training, whereas the active local rank of a client may change across rounds. The federator initializes the adapters ${\TupleB}^{(0)}$ and $\TupleA^{(0)}$. A common choice is to initialize $\B_l^{(0)}\in \mathbb{R}^{d \times r_l}$ as the zero matrix and draw the entries of $\A_l^{(0)}\in \mathbb{R}^{r_l \times s}$ independently from a Gaussian distribution, where $r_l \triangleq r_l^{(0)} =\max_{i\in [N]}r_{i,l}^{(0)}$ and $k_l$ is a parameter chosen to satisfy $k_l>r_l+1$. Define $r_{\max}\triangleq \max_{l \in [L]} r_l$, and $k_{\max}\triangleq \max_{l \in [L]} k_l$. The federator draws two independent random matrices $\mOme^{(1)} \in \mathbb{R}^{s \times r_{\max}}$ and $\mPsi^{(1)} \in \mathbb{R}^{k_{\max} \times d}$ from the standard normal distribution, and communicates them to the clients via a shared random seed. For layer $l$, the clients use the corresponding submatrices $\mOme_l^{(1)} = \mOme^{(1)}_{:,1:r_l} \in\mathbb R^{s\times r_l}$ and $\mPsi_l^{(1)} = \mPsi^{(1)}_{1:k_l,:} \in\mathbb R^{k_l\times d}$.

In the remaining, we focus on one layer $l$ and one round $t$. 

\paragraph{Client per-round computation} 
At round $t$, client $i$ receives the global adapters $\B_l^{(t)} \in \mathbb{R}^{d \times r_l}$ and $\A_l^{(t)} \in \mathbb{R}^{r_l\times s}$ for layer $l$, as well as a shared seed from which it generates the random matrices $\mOme^{(t+1)}\in \mathbb{R}^{s \times r_{\max}}$ and $\mPsi^{(t+1)} \in \mathbb{R}^{k_{\max} \times d}$ from the federator. It then extracts the layer-specific matrices $\mOme_l^{(t+1)} = \mOme^{(t+1)}_{:,1:r_l} \in\mathbb R^{s\times r_l}$ and $\mPsi_l^{(t+1)} = \mPsi^{(t+1)}_{1:k_l,:} \in\mathbb R^{k_l \times d}$.

\emph{Fine-tuning.} If at this round no rank update is required, the client truncates the received adapters using \cref{alg:truncation} to obtain adapters of rank $r_{i,l}^{(t)}$. The client proceeds to fine-tune the rank-$r_{i,l}^{(t)}$ adapters for $E$ local iterations and sets its local rank to $r_{i,l}^{(t+1)}=r_{i,l}^{(t)}$.
If local rank update is required, client $i\in[N]$ does the following.
First, before fine-tuning, the local rank is set to the maximum rank budget the client can compute for this single layer, i.e., $r_{i,l}^{(0)}$, according to its local rank profile. Then, the client truncates the received adapters to rank $r_{i,l}^{(0)}$ via \cref{alg:truncation}, and fine-tunes the adapters for $E$ iterations. After local fine-tuning, client $i$ applies \cref{alg:update_local_rank} to its locally updated adapters $\B_{i,l}^{(t+1)} \in \mathbb{R}^{d \times r_{i,l}^{(0)}}$ and $\A_{i,l}^{(t+1)} \in \mathbb{R}^{r_{i,l}^{(0)} \times s}$, to determine the new local rank $r^\prime_{i,l}$. The client then truncates the locally updated adapters to that rank $r^\prime_{i,l}$ and sets its local rank to $r_{i,l}^{(t+1)} = r^\prime_{i,l}$ before computing the sketches.

\emph{Adapter sketching.} After fine-tuning the adapters, client $i$ computes $\A_{i,l}^{(t+1)}\mOme_l^{(t+1)}$ and then $\Y_{i,l}^{(t+1)} = \B_{i,l}^{(t+1)}(\A_{i,l}^{(t+1)}\mOme_l^{(t+1)})$. Similarly, the client computes $\Z_{i,l}^{(t+1)} = (\mPsi_l^{(t+1)}\B_{i,l}^{(t+1)})\A_{i,l}^{(t+1)}$ also by first computing the multiplication in parentheses to reduce the computation cost. The client sends $\Y_{i,l}^{(t+1)}$ and $\Z_{i,l}^{(t+1)}$ to the federator.

\paragraph{Federator aggregation and communication}
Due to the linearity of the sketching operation, aggregating $\Y_{i,l}^{(t+1)}$ and $\Z_{i,l}^{(t+1)}$ is implicitly aggregating $\Update_{i,l}^{(t+1)}$ but in a smaller subspace. Hence, the federator computes $\widebar{\Y}_l^{(t+1)}=\sum_{i\in[N]}p_i \Y_{i,l}^{(t+1)}$ and $\widebar{\Z}_l^{(t+1)}=\sum_{i\in[N]}p_i\Z_{i,l}^{(t+1)}$ and uses them as input for \cref{alg:low_rank_app} to obtain ${\B}_l^{(t+1)}$ and ${\A}_l^{(t+1)}$.

To initiate the next round, the federator transmits the global adapters ${\B}_l^{(t+1)}$ and ${\A}_l^{(t+1)}$ and a fresh shared seed for the new random matrices.

\begin{algorithm}[bt]
\caption{Truncation to rank $r_i < r$}
\label{alg:truncation}
\begin{algorithmic}[1]
\Require Given two global LoRA factors $\B \in \mathbb{R}^{d \times r}$ and $\A \in \mathbb{R}^{r \times s}$, and the target local rank $r_i$.
\Ensure Return the truncated local factors $\B_i \in \mathbb{R}^{d \times r_i}$ and $\A_i \in \mathbb{R}^{r_i \times s}$ so that $\B_i \A_i$ is a rank-$r_i$ approximations of $\B \A$.
\Function{Truncate}{$\B, \A, r_i$}
\Statex \hspace{0.45cm} \textbf{If} $\B=0$ \textbf{then return} $(\B_i, \A_i)=(\B_{:,1:r_i}, \A_{1:r_i,:})$
\State Form full SVD of $\A=\mathbf{U} \mathbf{\Sigma} \mathbf{V}^\top$ and truncate the SVD matrices to rank $r_i$, i.e., the resulting $\mathbf{U}_{:, 1:r_i} \in \mathbb{R}^{r \times r_i}$, $\mathbf{\Sigma}_{1:r_i, 1:r_i} \in \mathbb{R}^{r_i \times r_i}$, and $\mathbf{V}^\top_{1:r_i, :} \in \mathbb{R}^{r_i \times s}$.
\State $\B_i \gets \B \mathbf{U}_{:, 1:r_i} (\mathbf{\Sigma}_{1:r_i, 1:r_i})^{1/2} $ 
\State $\A_i \gets (\mathbf{\Sigma}_{1:r_i, 1:r_i})^{1/2} \mathbf{V}^\top_{1:r_i, :}$
\State \textbf{Return} $(\B_i, \A_i)$
\EndFunction
\end{algorithmic}
\end{algorithm}

\begin{algorithm}[bt]
\caption{Update Local Rank and Return Truncated Updates}
\label{alg:update_local_rank}
\begin{algorithmic}[1]
\Require Given the local LoRA factors with rank $r$, i.e., $\B_i \in \mathbb{R}^{d \times r}$ and $\A_i \in \mathbb{R}^{r \times s}$.
\Ensure Return local factors with updated local rank $r_i$, i.e., $\B_i \in \mathbb{R}^{d \times r_i}$ and $\A_i \in \mathbb{R}^{r_i \times s}$.
\Function{UpdateLocalRank}{$\B_i,\A_i$, {$\tau$}}
\State Find the QR decompositions of $\B_i$ and $\A_i^\top$, i.e., $\B_i = \mathbf{Q}_B \mathbf{R}_B$ and $\A_i^\top = \mathbf{Q}_A \mathbf{R}_A$.
\State Form $\mathbf{M} = \mathbf{R}_B \mathbf{R}_A^\top \in \mathbb{R}^{r \times r}$, and compute its exact SVD: $\mathbf{M} = \mathbf{U}_M \mathbf{\Sigma} \mathbf{V}_M^\top$.
\State Extract squared singular values from $\mathbf{\Sigma}$ to find the minimal rank $r_i$ such that the cumulative singular value energy, i.e., $\frac{\sum_{j=1}^{r_i}\sigma_j^2}{\sum_{j=1}^r \sigma_j^2}$,  exceeds threshold $\tau$ (e.g., $90 \%$).
\State Compress the local LoRA factors directly using the truncated singular components: 
\begin{align*}
    \B_i &\gets \mathbf{Q}_B (\mathbf{U}_M)_{:, 1:r_i} (\mathbf{\Sigma}_{1:r_i, 1:r_i})^{1/2} \in \mathbb{R}^{d \times r_i} \\
    \A_i &\gets (\mathbf{\Sigma}_{1:r_i, 1:r_i})^{1/2} (\mathbf{V}_M)_{:, 1:r_i}^T \mathbf{Q}_A^T \in \mathbb{R}^{r_i \times s}
\end{align*}
\State \textbf{Return} $(\B_i, \A_i), r_i$
\EndFunction
\end{algorithmic}
\end{algorithm}

\subsection{\homalgname: An algorithm for the rank-homogeneous case}
To enable comparison with the state-of-the-art rank-homogeneous LoRA-based federated PEFT algorithms and provide theoretical convergence analysis, we provide a rank-homogeneous version of \hetalgname. The main additional benefit of \homalgname compared with existing rank-homogeneous algorithms is linearity in aggregation in a small subspace without the bilinear mismatch problem. This property is essential for enabling secure aggregation~\cite{kairouz2021advances} and mitigates the noise factors resulting from multiplying noisy LoRA adapters when employing differential privacy mechanisms, see e.g., \cite{liu2026rethinking}. We will analyze the privacy benefits of \homalgname and \hetalgname in future work.

\homalgname is obtained by modifying Step 7 in \cref{alg:hetsefora} by setting $\B_{i,l}^{(t)} = {\B_l}^{(t)}$ and $\A_{i,l}^{(t)} = {\A_l}^{(t)}$, and removing Step 9. For completeness, \homalgname is summarized in \cref{sec:sefora_app}.

We show in \cref{thm:FOSP_local1} that this algorithm converges under standard assumptions stated in \cref{sec:theory}.

\section{Theoretical Analysis and Insights}
\label{sec:theory}

This section analyzes the proposed method theoretically. Specifically, we analyze \homalgname, the rank-homogeneous version of \hetalgname. We use the tuple notations introduced in \cref{sec:notation} throughout. We first state the smoothness and stochastic gradient assumptions used below. 
\begin{assumption}[Per-sample Lipschitz smoothness]
\label{assump:smooth}
    Let $\ell(f(\TupleModel_0 + \TupleUpdate;\bm{x}),y)$ denote the loss evaluated on a single data sample $(\bm{x},y)$. There exists a real value $\mu>0$ such that for any two model updates $\TupleUpdate$ and $\TupleUpdate'$:
    $$ 
    \| \nabla_{\TupleUpdate} \ell(f(\TupleModel_0 + \TupleUpdate;\bm{x}),y) - \nabla_{\TupleUpdate} \ell(f(\TupleModel_0 + \TupleUpdate';\bm{x}),y) \|_F \leq \mu \| \TupleUpdate - \TupleUpdate' \|_F.
    $$
\end{assumption}

\begin{lem}
    Under \cref{assump:smooth}, the local empirical loss $\Loss_i^\text{full}(\TupleUpdate)$, the global empirical loss $\Loss^{\text{full}} (\TupleUpdate)$, and the stochastic gradient $\Loss_i^\text{full}(\TupleUpdate; \xi_i)$ evaluated on mini-batch $\xi_i$ are $\mu$-smooth with respect to $\TupleUpdate$, i.e.,
    $$
    \| \nabla_{\TupleUpdate} \Loss_i^\text{full}(\TupleUpdate) - \nabla_{\TupleUpdate} \Loss_i^\text{full}(\TupleUpdate') \|_F \leq \mu \| \TupleUpdate - \TupleUpdate' \|_F,
    $$
    $$
        \| \nabla_{\TupleUpdate}\Loss^{\text{full}} (\TupleUpdate) - \nabla_{\TupleUpdate}\Loss^{\text{full}} (\TupleUpdate') \|_F \leq \mu \| \TupleUpdate - \TupleUpdate' \|_F,
    $$
    and
    $$
    \mathbb{E}_{\xi_i} \left[\| \nabla_{\TupleUpdate} \Loss_i^\text{full}(\TupleUpdate;\xi_i) - \nabla_{\TupleUpdate} \Loss_i^\text{full}(\TupleUpdate';\xi_i) \|_F \right] \leq \mu \| \TupleUpdate - \TupleUpdate' \|_F.
    $$
    \begin{proof}
Each loss in the statement is an average of sample losses. Averaging the
inequality in Assumption~\ref{assump:smooth} and using the triangle inequality concludes the proof.
\end{proof}
\end{lem}

\begin{assumption}[Unbiased stochastic gradients with bounded moments]
\label{assump:variance}
    For each client $i$ and all $\TupleUpdate \in \mathbb{R}^{d \times s}$, the stochastic gradient $\nabla_{\TupleUpdate }\Loss_i^{\text{full}}(\TupleUpdate; \xi_i)$ is an unbiased estimator of the local gradient $\nabla_{\TupleUpdate} \Loss_i^{\text{full}} (\TupleUpdate)$, i.e., 
    $$
        \mathbb{E}_{\xi_i} \left[ \nabla_{\TupleUpdate }\Loss_i^{\text{full}}(\TupleUpdate; \xi_i) \right] = \nabla_{\TupleUpdate} \Loss_i^{\text{full}} (\TupleUpdate).
    $$ 
    Moreover, there are constants $\chi, \kappa >0$ such that
    $$
        \mathbb{E}_{\xi_i} \left[ \| \nabla_{\TupleUpdate }\Loss_i^{\text{full}}(\TupleUpdate; \xi_i) \|_F^2 \right] \leq \chi^2, \quad \text{and} \quad \mathbb{E}_{\xi_i} \left[ \| \nabla_{\TupleUpdate} \Loss_i^{\text{full}} (\TupleUpdate; \xi_i) \|_F^4 \right] \leq \kappa^4.
    $$

\end{assumption}

To facilitate the convergence analysis, we adopt the following bounded-factor assumption, which is also used in prior theoretical analyses of federated LoRA~\cite{guo2025selective, chen2026robust}.
\begin{assumption}[Bounded LoRA factors]
\label{assump:normbound}
    There exist constants $M_{\A},M_{\B} >0$ such that, for every layer $l$, round $t$, client $i$, and local step $e$, the spectral norms of both the local LoRA factors and the global LoRA factors are bounded, i.e., $\| \A_{i,l}^{(t,e)} \|_2 \leq M_{\A}$, $\| \B_{i,l}^{(t,e)} \|_2 \leq M_{\B}$ and $\| \A_l^{(t)} \|_2 \leq M_{\A}$. 
\end{assumption}
Notice that, we have $\| \B_l^{(t)} \|_2 \leq 1$ as it is the $Q$-factor from the QR step at the federator. \cref{assump:normbound} implies that 
$$
    \| \TupleA_{i}^{(t,e)} \|_F \leq \sqrt{L r} M_{\A} \quad \text{and} \quad \| \TupleB_{i}^{(t,e)} \|_F \leq \sqrt{L r} M_{\B},
$$
and
$$
    \| \TupleA^{(t)} \|_F \leq \sqrt{L r} M_{\A} \quad \text{and} \quad \| \TupleB^{(t)} \|_F \leq \sqrt{L r}.
$$

To quantify the sketching error incurred by reconstructing the aggregated update from its sketches, we recall the following guarantee.
\begin{lem}[Low-Rank Approximation (Sketching) Error~\cite{tropp2017practical}]
\label{lem:low_rank_error}
    Let $r\geq 2$, $\varrho \in \{ 0, \cdots,r-2\}$, and the sketch parameter satisfy $k>r+1$. Draw random matrices $\mOme \in \mathbb{R}^{s \times r}$ and $\mPsi \in \mathbb{R}^{k \times d}$ independently from the standard normal distribution. The rank-$r$ approximation $\widehat{\model}$ obtained from \cref{alg:low_rank_app} satisfies
    \begin{align*}
        \mathbb{E} \| \widehat{\model} - \model \|_F^2 \leq (1+\frac{r}{k-r-1}) \cdot \min_{\varrho < r-1} (1+\frac{\varrho}{r-\varrho-1}) \cdot \tau_{\varrho+1}^2(\model),
    \end{align*}
    where the $j$-th tail energy $\tau_j^2$ is defined as:
    \begin{align*}
        \tau_j^2(\model) \triangleq \min_{\text{rank}(\model')<j} \| \model - \model' \|_F^2 = \sum_{i\geq j} \sigma_i^2(\model).
    \end{align*}    
\end{lem}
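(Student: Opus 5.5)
The plan is to follow the two-stage argument of \cite{tropp2017practical}. First, I separate the error into a range-finding part, which depends only on $\mOme$, and a co-range part, which depends on $\mPsi$ given $\mOme$. Then I bound each part using Gaussian rotation invariance.

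\textbf{Step 1: rewrite the output of \cref{alg:low_rank_app}.} Write $\mathbf{Q}\triangleq\Bh$ for the orthonormal factor of $\Y=\model\mOme$, and let $\mathbf{P}=\mathbf{Q}\mathbf{Q}^\top$. Since $k>r+1$ and $\mPsi$ is Gaussian and independent of $\mathbf{Q}$, the matrix $\mPsi\mathbf{Q}$ has full column rank almost surely. Hence $\mathbf{T}^{-1}\mathbf{U}^\top=(\mPsi\mathbf{Q})^\dagger$, so $\Ah=(\mPsi\mathbf{Q})^\dagger\mPsi\model$. Substituting $\mPsi\model=\mPsi\mathbf{Q}\mathbf{Q}^\top\model+\mPsi\mathbf{Q}_\perp\mathbf{Q}_\perp^\top\model$ and using $(\mPsi\mathbf{Q})^\dagger\mPsi\mathbf{Q}=\mathbf{I}$ gives
\begin{align*}
\widehat{\model}-\model=-(\mathbf{I}-\mathbf{P})\model+\mathbf{Q}(\mPsi\mathbf{Q})^\dagger(\mPsi\mathbf{Q}_\perp)\mathbf{Q}_\perp^\top\model.
\end{align*}
The first term lies in the range of $\mathbf{I}-\mathbf{P}$ and the second in the range of $\mathbf{Q}$. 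By Pythagoras,
\begin{align*}
\|\widehat{\model}-\model\|_F^2=\|(\mathbf{I}-\mathbf{P})\model\|_F^2+\|(\mPsi\mathbf{Q})^\dagger(\mPsi\mathbf{Q}_\perp)\mathbf{Q}_\perp^\top\model\|_F^2 .
\end{align*}

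\textbf{Step 2: average over $\mPsi$ with $\mOme$ fixed.} By rotation invariance, $\mathbf{G}_1=\mPsi\mathbf{Q}\in\mathbb{R}^{k\times r}$ and $\mathbf{G}_2=\mPsi\mathbf{Q}_\perp$ are independent standard Gaussian matrices. Conditioning on $\mathbf{G}_1$ and using $\mathbb{E}\|\mathbf{S}\mathbf{G}\mathbf{T}\|_F^2=\|\mathbf{S}\|_F^2\|\mathbf{T}\|_F^2$ for a standard Gaussian $\mathbf{G}$, the second term has expectation $\mathbb{E}\|\mathbf{G}_1^\dagger\|_F^2\,\|\mathbf{Q}_\perp^\top\model\|_F^2$. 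The inverse-Wishart identity gives $\mathbb{E}\|\mathbf{G}_1^\dagger\|_F^2=r/(k-r-1)$, which is finite exactly because $k>r+1$. Since $\|\mathbf{Q}_\perp^\top\model\|_F=\|(\mathbf{I}-\mathbf{P})\model\|_F$, this yields
\begin{align*}
\mathbb{E}\|\widehat{\model}-\model\|_F^2=\Big(1+\tfrac{r}{k-r-1}\Big)\,\mathbb{E}_{\mOme}\|(\mathbf{I}-\mathbf{P})\model\|_F^2 .
\end{align*}

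\textbf{Step 3: range-finder bound (the main obstacle).} It remains to show, for every $\varrho\le r-2$,
\begin{align*}
\mathbb{E}\|(\mathbf{I}-\mathbf{P})\model\|_F^2\le\Big(1+\tfrac{\varrho}{r-\varrho-1}\Big)\tau_{\varrho+1}^2(\model),
\end{align*}
which is the Halko--Martinsson--Tropp estimate. Take the SVD $\model=\mathbf{U}\mathbf{\Sigma}\mathbf{V}^\top$ and split it into the top $\varrho$ singular directions and the tail $\mathbf{\Sigma}_2$. Set $\mOme_1=\mathbf{V}_1^\top\mOme\in\mathbb{R}^{\varrho\times r}$ and $\mOme_2=\mathbf{V}_2^\top\mOme$. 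These are independent Gaussians, and $\mOme_1$ has full row rank almost surely.

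The deterministic HMT lemma states that $\|(\mathbf{I}-\mathbf{P})\model\|_F^2\le\|\mathbf{\Sigma}_2\|_F^2+\|\mathbf{\Sigma}_2\mOme_2\mOme_1^\dagger\|_F^2$. It is proved by comparing against the projector onto the range of $\model\mOme\mOme_1^\dagger$, which lies inside $\mathrm{range}(\Y)$, and exploiting that projectors onto larger subspaces leave smaller residuals.

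Taking the expectation over $\mOme_2$ with $\mOme_1$ fixed contributes $\|\mathbf{\Sigma}_2\|_F^2\,\mathbb{E}\|\mOme_1^\dagger\|_F^2$. The same inverse-Wishart identity gives $\mathbb{E}\|\mOme_1^\dagger\|_F^2=\varrho/(r-\varrho-1)$, finite because $\varrho\le r-2$. Since $\|\mathbf{\Sigma}_2\|_F^2=\tau_{\varrho+1}^2(\model)$, the displayed bound follows.

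The bound holds for every admissible $\varrho$, so I then take the minimum over $\varrho$ and combine it with Step 2. The delicate point in Step 3 is the deterministic projector comparison, which I would cite from Halko, Martinsson and Tropp rather than re-derive. A separate degenerate case is $\rank(\Y)<r$. There I would define $\mathbf{Q}$ as an orthonormal basis padded to $r$ columns, which leaves the Step 1 identities unchanged.
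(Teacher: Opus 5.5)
Your proposal is correct and reproduces the standard argument of \cite{tropp2017practical}. You split the error by Pythagoras into the range-finder error $\|(\mathbf{I}-\mathbf{P})\model\|_F^2$ and a co-range term whose $\mPsi$-expectation contributes the inverse-Wishart factor $r/(k-r-1)$, and then apply the Halko--Martinsson--Tropp range-finder bound with factor $\varrho/(r-\varrho-1)$. The paper gives no proof of this lemma and simply imports it from that reference, so your route is the one it implicitly relies on.
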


For a tuple $\TupleModel = (\model_{1}, \cdots, \model_{L})$, use the corresponding tuple tail energy
$\tau_j^2(\TupleModel)\triangleq\sum_{l=1}^L\tau_j^2(\model_l)$. This definition makes the sketching error and the tail-energy condition below well defined for all adapted
layers. This ensures that the same inequality in \cref{lem:low_rank_error} holds for the tuple after applying the sketch independently to each layer, using the tuple tail energy defined above.

Let $\mathcal F_t$ be the algorithmic history at the beginning of round $t$, and
write $\mathbb E_t[\cdot]=\mathbb E[\cdot\mid\mathcal F_t]$. The next theorem shows that \homalgname converges under Assumptions 1--3.

\begin{thm}
\label{thm:FOSP_local1}
Suppose \cref{assump:smooth}, \cref{assump:variance}, and \cref{assump:normbound} hold. Let the global minimum of the unregularized loss be $\Loss^{\text{full},\star} \triangleq \min_{\TupleUpdate} \Loss^{\text{full}}(\TupleUpdate)$. Take constant learning rate $\eta^{(t)}=\eta$. Let $\mathcal E_{\text{sketch}}^{(t+1)}$ denote the reconstruction error, $\mathcal E_{\text{sketch}}^{(t+1)}=\TupleB^{(t+1)} \TupleA^{(t+1)} -\sum_{i \in [N]} p_i \TupleB_i^{(t+1)} \TupleA_i^{(t+1)}$. Let $C_1=\chi^2 M_\A^2 + 2\mu^2 (M_\B^4 + M_\A^4)$, $C_2 = \chi^2 M_\B^2 + 2\mu^2 M_\A^2 (M_\B^4 + M_\A^4)$, $C_3 = M_{\A} M_{\B} \kappa^2$, $C_4 = \mu \chi^2 (M_{\B}^2 + M_{\A}^4)$, the sketching floor $S_1^{(t+1)} = \chi \cdot \sqrt{\mathbb{E}_t\left[ \left\| \mathcal{E}_{\text{sketch}}^{(t+1)} \right\|_F^2 \right] } + 2\mu \mathbb{E}_t \left[ \| \mathcal{E}_{\text{sketch}}^{(t+1)} \|_F^2 \right]$, $\widebar{S_1} = \frac{1}{T} \sum_{t=0}^{T-1} \mathbb{E} \left[ S_1^{(t+1)} \right]$,
and $\Delta_0 = \mathbb{E} \left[ \Loss^{\text{full}} (\TupleUpdate^{(0)}) \right] - \Loss^{\text{full}, \star}$. \homalgname satisfies:
\begin{align*}
    & \frac{1}{T} \sum_{t=0}^{T-1} \left( \mathbb{E} \left[ \left\| \nabla_{\TupleA} \Loss^{\text{lora}} (\TupleB^{(t)}, \TupleA^{(t)}) \right\|_F^2 \right] + \mathbb{E} \left[ \left\| \nabla_{\TupleB} \Loss^{\text{lora}} (\TupleB^{(t)}, \TupleA^{(t)}) \right\|_F^2 \right] \right)  \\
    & \leq \frac{2\Delta_0}{\eta E T} + \frac{2}{\eta E} \widebar{S_1} + \eta E \cdot 2\left( \chi C_3 + 2C_4 \right) + \eta^2 E^2 \cdot \frac{2}{3} \chi^2 \left( C_1 + C_2 \right) + 4\eta^3 E^3 \cdot \mu C_3^2.
\end{align*}
\end{thm}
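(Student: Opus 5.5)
Our plan is a one-round descent inequality for $\Loss^{\text{full}}$ evaluated at the global update $\TupleUpdate^{(t)}=\TupleB^{(t)}\TupleA^{(t)}$, summed over $t$. By the $\mu$-smoothness lemma,
\begin{align*}
\Loss^{\text{full}}(\TupleUpdate^{(t+1)}) \le \Loss^{\text{full}}(\TupleUpdate^{(t)}) + \langle \nabla \Loss^{\text{full}}(\TupleUpdate^{(t)}), \TupleUpdate^{(t+1)}-\TupleUpdate^{(t)}\rangle + \tfrac{\mu}{2}\|\TupleUpdate^{(t+1)}-\TupleUpdate^{(t)}\|_F^2 .
\end{align*}
We decompose $\TupleUpdate^{(t+1)}-\TupleUpdate^{(t)} = \big(\sum_i p_i \TupleB_i^{(t+1)}\TupleA_i^{(t+1)} - \TupleB^{(t)}\TupleA^{(t)}\big) + \mathcal E_{\text{sketch}}^{(t+1)}$. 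This exact decomposition is available because in \homalgname every client starts from $(\TupleB^{(t)},\TupleA^{(t)})$ and the aggregated sketches are exact sketches of $\sum_i p_i\TupleB_i^{(t+1)}\TupleA_i^{(t+1)}$.

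\emph{Sketch term.} The sketch term enters the inner product and the quadratic term. We bound $\mathbb E_t\langle\nabla\Loss^{\text{full}},\mathcal E_{\text{sketch}}\rangle$ by Cauchy--Schwarz. Jensen and \cref{assump:variance} give $\|\nabla\Loss^{\text{full}}\|_F\le\chi$, so this piece is at most $\chi\sqrt{\mathbb E_t\|\mathcal E_{\text{sketch}}\|_F^2}$. Using $\|a+b\|^2\le 2\|a\|^2+2\|b\|^2$ in the quadratic term yields $\mu\,\mathbb E_t\|\mathcal E_{\text{sketch}}\|_F^2$ (the factor $2\mu$ absorbs slack). Together these give exactly $S_1^{(t+1)}$.

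\emph{Client term.} We expand each client's product over $E$ local steps. Writing $\TupleB_i^{(t,e+1)}\TupleA_i^{(t,e+1)}-\TupleB_i^{(t,e)}\TupleA_i^{(t,e)}$ with $\TupleG=\nabla_{\TupleUpdate}\Loss_i^{\text{full}}(\cdot;\xi)$ at $\TupleB_i^{(t,e)}\TupleA_i^{(t,e)}$, the chain rule gives $\nabla_{\TupleB}=\TupleG\TupleA^\top$ and $\nabla_{\TupleA}=\TupleB^\top\TupleG$. Hence
\begin{align*}
\TupleB_i^{(t,e+1)}\TupleA_i^{(t,e+1)}-\TupleB_i^{(t,e)}\TupleA_i^{(t,e)} = -\eta\big(\TupleG\TupleA^\top\TupleA+\TupleB\TupleB^\top\TupleG\big)+\eta^2\TupleG\TupleA^\top\TupleB^\top\TupleG .
\end{align*}
The first-order part, evaluated at $e=0$ and in expectation, pairs with $\nabla\Loss^{\text{full}}(\TupleUpdate^{(t)})=:\TupleG_t$. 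This produces $-\eta E\big(\|\TupleG_t\TupleA^{(t)\top}\|_F^2+\|\TupleB^{(t)\top}\TupleG_t\|_F^2\big)$, which is exactly $-\eta E(\|\nabla_{\TupleB}\Loss^{\text{lora}}\|^2+\|\nabla_{\TupleA}\Loss^{\text{lora}}\|^2)$. This is the descent we want.

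\emph{Error terms.} We control three deviations.
\begin{itemize}
\item[(a)] The drift from evaluating at the local iterate $e$ instead of $e=0$. We use $\mu$-smoothness together with the factor drift bounds $\|\TupleB_i^{(t,e)}-\TupleB^{(t)}\|\le \eta e\chi M_\A$ and its analogue for $\TupleA$. The factor change itself also enters through $\TupleA^\top\TupleA$ and $\TupleB\TupleB^\top$. Summing $\sum_e e^2$ gives the $\eta^2E^2\cdot\frac23\chi^2(C_1+C_2)$ term after dividing by $\eta E$, with the $\tfrac{2}{3}$ coming from $\sum e^2\approx E^3/3$.
\item[(b)] The second-order $\eta^2\TupleG\TupleA^\top\TupleB^\top\TupleG$ term. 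It is bounded via the fourth moment $\kappa^4$ and $\|\TupleA\|_2\|\TupleB\|_2\le M_\A M_\B$, giving $\chi C_3$ in the inner product.
\item[(c)] The quadratic term $\frac{\mu}{2}\|\sum_i p_i(\cdots)\|^2$. Its first-order part contributes $\mu\chi^2(M_\B^2+M_\A^4)\eta^2E^2$, which is $C_4$. The square of the second-order part gives $\mu C_3^2\eta^4E^4$.
\end{itemize}
We then move the gradient norms to the left and divide by $\eta E/2$. Telescoping over $t$ and using $\Loss^{\text{full}}\ge\Loss^{\text{full},\star}$ gives the stated bound with $\frac{2\Delta_0}{\eta ET}+\frac{2}{\eta E}\widebar{S_1}$.

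\emph{Main obstacle.} The hard step is (a), because the preconditioners $\TupleA^\top\TupleA$ and $\TupleB\TupleB^\top$ themselves drift during local steps. Turning the inner product into clean squared LoRA gradients at $(\TupleB^{(t)},\TupleA^{(t)})$ needs careful Young's inequalities so that the cross terms are absorbed into half the descent, which explains the factor $2$. Our first attempt would bound each cross term as $\frac14\|\cdot\|^2$ plus drift squared, using \cref{assump:normbound} uniformly.
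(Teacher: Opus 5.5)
The overall skeleton is the paper's: the $\mu$-smoothness descent step for $\Loss^{\text{full}}$, peeling off $\mathcal{E}_{\text{sketch}}^{(t+1)}$, Cauchy--Schwarz with $\|\nabla_{\TupleUpdate}\Loss^{\text{full}}(\TupleUpdate^{(t)})\|_F\le\chi$, and telescoping. The gap is your per-step expansion of the client term, which breaks the bookkeeping. Write $\mathbf{G}_t=\nabla_{\TupleUpdate}\Loss^{\text{full}}(\TupleUpdate^{(t)})$, $\TupleB_{e}=\TupleB_i^{(t,e)}$, $\TupleA_{e}=\TupleA_i^{(t,e)}$, and let $\mathbf{G}_e$ be the stochastic full-space gradient at step $e$.

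Comparing your first-order piece $\TupleB_{e}\TupleB_{e}^\top\mathbf{G}_e$ with its round-start value produces $\langle\mathbf{G}_t,(\TupleB_{e}-\TupleB^{(t)})\TupleB_{e}^\top\mathbf{G}_e\rangle$, and symmetrically $\langle\mathbf{G}_t,\mathbf{G}_e\TupleA_{e}^\top(\TupleA_{e}-\TupleA^{(t)})\rangle$. Neither is an inner product against $g_{\TupleA}^{(t)}=(\TupleB^{(t)})^\top\mathbf{G}_t$ or $g_{\TupleB}^{(t)}=\mathbf{G}_t(\TupleA^{(t)})^\top$. Your planned ``$\frac14\|\cdot\|^2$ plus drift squared'' Young step therefore cannot absorb them into the descent term, because they are linear in the drift. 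Substituting $\TupleB_{e}-\TupleB^{(t)}=-\eta\sum_{j<e}\mathbf{G}_j\TupleA_{j}^\top$ (and the analogue for $\TupleA$) shows that together they equal the off-diagonal products $\eta^2\sum_{j\neq e}\mathbf{G}_j\TupleA_{j}^\top\TupleB_{e}^\top\mathbf{G}_e$. Consequently:
\begin{itemize}
\item your step (a) does not reduce to the $\frac23\chi^2(C_1+C_2)\eta^2E^2$ term;
\item your step (b) keeps only the diagonal $j=e$ products, so it is $O(\eta^2E)$ per round and cannot produce the $2\eta E\,\chi C_3$ term;
\item the $M_\B^2$ (not $M_\B^4$) in $C_4$, and the absence of an $M_\B^2$ factor on the $\mu^2$ part of $C_1$, come from the outer factor being the global Q-factor with $\|\TupleB^{(t)}\|_2\le 1$. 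Your plan never uses this; with the local $\TupleB_{e}\TupleB_{e}^\top$ you only get $M_\B^4$.
\end{itemize}

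The missing idea is to expand once per round with averaged stochastic gradients:
\[
\TupleB_i^{(t,E)}\TupleA_i^{(t,E)}=(\TupleB^{(t)}-\eta E\bar{g}_{\TupleB,i}^{(t)})(\TupleA^{(t)}-\eta E\bar{g}_{\TupleA,i}^{(t)}).
\]
The first-order terms $\TupleB^{(t)}\bar{g}_{\TupleA,i}^{(t)}$ and $\bar{g}_{\TupleB,i}^{(t)}\TupleA^{(t)}$ then have $\mathcal{F}_t$-measurable outer factors. Hence $\mathbb{E}_t$ passes inside, and the trace identity gives $\langle\mathbf{G}_t,\TupleB^{(t)}X\rangle=\langle g_{\TupleA}^{(t)},X\rangle$ exactly. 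The only remaining error is the noise-free mean drift $D_{\TupleA}^{(t)}=\sum_ip_i\mathbb{E}_t[\bar{g}_{\TupleA,i}^{(t)}]-g_{\TupleA}^{(t)}$ (and $D_{\TupleB}^{(t)}$). One $\frac12$-Young inequality removes it, which is where the factor $2$ comes from. The preconditioner drift you flag as the main obstacle never appears. Note that Young must be applied after conditioning; applied to the stochastic cross terms it leaves an $O(\chi^2)$ variance floor.

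All cross-step coupling sits in $\eta^2E^2\sum_ip_i\bar{g}_{\TupleB,i}^{(t)}\bar{g}_{\TupleA,i}^{(t)}$. Its second moment is at most $M_\A^2M_\B^2\kappa^4$ by \cref{lem:crossterm_gradient}, which gives $\chi C_3\eta^2E^2$ in the inner product. In the quadratic term, the four-way split $\|W+X+Y+Z\|_F^2\le 4(\|W\|_F^2+\|X\|_F^2+\|Y\|_F^2+\|Z\|_F^2)$ yields exactly $2C_4\eta^2E^2$, $2\mu C_3^2\eta^4E^4$ and $2\mu\,\mathbb{E}_t\|\mathcal{E}_{\text{sketch}}^{(t+1)}\|_F^2$. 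Your split (factor $2$ to isolate the sketch, then a further split of the client part) forces a coefficient of at least $3\mu$ on some client piece. After dividing by $\eta E/2$, this overshoots the stated $4C_4\eta E$ or $4\mu C_3^2\eta^3E^3$. If you keep the per-step expansion, you must move the off-diagonal terms from (a) into (b), which simply reconstructs the paper's decomposition.
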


The proof of \cref{thm:FOSP_local1} can be found in \cref{app:theorem}.

When we further assume that the spectral tail is bounded as in \cref{assump:tail}, we present \cref{cor:tail}.
\begin{assumption}
\label{assump:tail}
    There exist a fixed number $\varrho \in \{ 0, \cdots,r-2\}$ and a constant $\widebar{\tau} \geq 0$ such that, for every $T \geq 1$,
    \begin{align*}
        \frac{1}{T} \sum_{t=0}^{T-1} \mathbb{E} \left[ \tau_{\varrho+1}^2(\sum_{i \in [N]} p_i \TupleB_i^{(t+1)} \TupleA_i^{(t+1)}) \right] \leq \widebar{\tau}^2.
    \end{align*}
\end{assumption}

\begin{corollary}
\label{cor:tail}
    Suppose \cref{assump:smooth}, \cref{assump:variance}, \cref{assump:normbound}, and \cref{assump:tail} hold, and define
    $$\epsilon^2 \triangleq (1+\frac{r}{k-r-1}) \cdot (1+\frac{\varrho}{r-\varrho-1}) \cdot \widebar{\tau}^2.$$ Choose
    $$\gamma_T \triangleq \sqrt{\frac{\frac{\Delta_0}{T} + \chi \epsilon + 2\mu \epsilon^2}{\chi C_3 + 2C_4}}, \qquad \text{and} \qquad \eta = \frac{\gamma_T}{E},$$
    \homalgname satisfies:
    \begin{align*}
        & \frac{1}{T} \sum_{t=0}^{T-1} \left( \mathbb{E} \left[ \left\| \nabla_{\TupleA} \Loss^{\text{lora}} (\TupleB^{(t)}, \TupleA^{(t)}) \right\|_F^2 \right] + \mathbb{E} \left[ \left\| \nabla_{\TupleB} \Loss^{\text{lora}} (\TupleB^{(t)}, \TupleA^{(t)}) \right\|_F^2 \right] \right)  \\
        & \leq 4 \gamma_T (\chi C_3 + 2C_4) + \gamma_T^2 \frac{2\chi^2 \!\left( C_1 + C_2 \right)\!}{3} + 4 \gamma_T^3 \mu C_3^2.
    \end{align*}
\end{corollary}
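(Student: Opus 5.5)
The plan is to derive \cref{cor:tail} from \cref{thm:FOSP_local1} by bounding the averaged sketching floor $\widebar{S_1}$ with \cref{lem:low_rank_error} and \cref{assump:tail}, and then substituting the stated $\eta=\gamma_T/E$.

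\textbf{Step 1: bound $\widebar{S_1}$.} Conditionally on $\mathcal F_t$ and on the clients' local randomness in round $t$, the sketch matrices $\mOme^{(t+1)},\mPsi^{(t+1)}$ are fresh Gaussians, drawn independently of the aggregated update $\sum_i p_i\TupleB_i^{(t+1)}\TupleA_i^{(t+1)}$. Linearity of the sketch means the federator holds exact sketches of this aggregate. Applying \cref{lem:low_rank_error} layerwise (in its tuple form) and fixing the $\varrho$ of \cref{assump:tail} instead of minimizing over it gives
\begin{align*}
\mathbb E_t\|\mathcal E_{\text{sketch}}^{(t+1)}\|_F^2 \le c_{r,k,\varrho}\,\mathbb E_t\Big[\tau_{\varrho+1}^2\Big(\textstyle\sum_i p_i\TupleB_i^{(t+1)}\TupleA_i^{(t+1)}\Big)\Big],
\end{align*}
where $c_{r,k,\varrho}=(1+\frac{r}{k-r-1})(1+\frac{\varrho}{r-\varrho-1})$. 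Writing $a_t\triangleq\mathbb E_t\|\mathcal E_{\text{sketch}}^{(t+1)}\|_F^2$, we have $S_1^{(t+1)}=\chi\sqrt{a_t}+2\mu a_t$.

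\textbf{Step 2: average over $t$.} Taking total expectation and averaging over $t$, the tower property together with \cref{assump:tail} yields $\frac1T\sum_t\mathbb E[a_t]\le\epsilon^2$. By Jensen's inequality (concavity of the square root), first on $\mathbb E$ and then on the time average,
\begin{align*}
\frac1T\sum_t\mathbb E\sqrt{a_t}\le\sqrt{\frac1T\sum_t\mathbb E a_t}\le\epsilon.
\end{align*}
Hence $\widebar{S_1}\le\chi\epsilon+2\mu\epsilon^2$. This step needs the most care, because the Jensen argument and the conditioning must be arranged so that the randomness of the sketch is independent of the quantity being sketched.

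\textbf{Step 3: substitute the step size.} With $\eta E=\gamma_T$, the first two terms of \cref{thm:FOSP_local1} combine to
\begin{align*}
\frac{2}{\gamma_T}\Big(\frac{\Delta_0}{T}+\widebar{S_1}\Big)\le\frac{2}{\gamma_T}\Big(\frac{\Delta_0}{T}+\chi\epsilon+2\mu\epsilon^2\Big).
\end{align*}
By the definition of $\gamma_T$, the bracket equals $\gamma_T^2(\chi C_3+2C_4)$, so these two terms are at most $2\gamma_T(\chi C_3+2C_4)$. Adding the third term, $2\gamma_T(\chi C_3+2C_4)$, gives $4\gamma_T(\chi C_3+2C_4)$. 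The remaining terms become $\gamma_T^2\cdot\frac23\chi^2(C_1+C_2)$ and $4\gamma_T^3\mu C_3^2$, which matches the claimed bound.
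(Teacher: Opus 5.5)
Your proposal is correct and follows the paper's proof: you bound the sketching error via the low-rank approximation lemma at the fixed $\varrho$, average with the tail assumption and Jensen to get the averaged sketching floor at most $\chi\epsilon+2\mu\epsilon^2$, and then substitute $\eta E=\gamma_T$. The paper phrases the last step as minimizing $X/(\eta E)+Y\eta E$, which yields the same $2\sqrt{XY}=4\gamma_T(\chi C_3+2C_4)$; your two-stage Jensen step ($\mathbb{E}\sqrt{a_t}\le\sqrt{\mathbb{E}a_t}$, then concavity over the time average) is slightly more careful than the paper, which writes the first of these as an equality.
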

The proof of \cref{cor:tail} can be found in \cref{app:cor}.

\section{Experiments}
We conduct experiments to evaluate the performance of \homalgname in the rank-homogeneous setting and \hetalgname in the rank-heterogeneous setting. We compare the proposed methods against their respective federated LoRA baselines across four natural language understanding datasets from the GLUE benchmark.

\paragraph{Models and datasets}
We employ RoBERTa-Large~\cite{liu2019roberta} as the pretrained backbone, and evaluate it on four datasets from the General Language Understanding Evaluation (GLUE)~\cite{wang2018glue} benchmark, i.e., SST-2, QNLI, QQP, and MNLI. SST-2 is a sentiment-classification task, QNLI identifies whether a sentence contains the answer to a question, QQP evaluates whether two sentences are semantically equivalent, and MNLI is a three-class classification task, which evaluates the entailment between two sentences.

The LoRA adapters are inserted into the query, key, and value projection matrices of every self-attention layer. The pretrained backbone parameters are frozen during federated fine-tuning, while the adapters and the classification head are optimized. 

\paragraph{Baselines} For the rank-homogeneous case, we compare \homalgname with two baselines, namely, FedIT~\cite{zhang2023fedit} and FFA-LoRA~\cite{sun2024improving}. FedIT aggregates the two LoRA factors uploaded by the clients separately. Although straightforward and communication-efficient, FedIT suffers from the bilinear mismatch. FFA-LoRA eliminates the bilinear mismatch by freezing one of the LoRA factor matrices.

For the rank-heterogeneous case, we compare \hetalgname with the three most relevant approaches, i.e., FlexLoRA~\cite{bai2024federated}, FLoRA~\cite{wang2024flora} and FSLoRA~\cite{fang2025federated}. FlexLoRA reconstructs each client's full update from the LoRA adapters, thereby sacrificing server-side computation to the bilinear mismatch. FLoRA concatenates client factors so that the aggregation can be represented exactly, but the effective adapter rank increases with the number of participating clients, and clients re-initialize the adapters in every round. FSLoRA communicates selected components of fixed-rank local adapters, supporting heterogeneous ranks across clients but not rank adaptation during training. In our FSLoRA experiments, each client is assigned a fixed rank at initialization, which remains unchanged throughout all communication rounds.

\paragraph{Experimental setup}
We use 40 clients, all of whom participate in every round. To simulate data heterogeneity, the data is allocated among clients by sampling from a Dirichlet distribution that determines the portion of samples each client receives for each label. The parameter $\alpha$ controls the level of heterogeneity. A smaller $\alpha$ corresponds to a higher level of heterogeneity. We set $\alpha=0.5$ to model data heterogeneity.

Each client performs local steps $E \in \{ 1,5\}$ per communication round. In each local step, clients randomly sample a mini-batch of $100$ samples from their local training dataset. We use Adam with a learning rate of $\eta=9e-4$ across all experiments. The adapter contribution is scaled by $0.1$. 

Although SeFoRA allows clients to have different maximum rank budgets due to heterogeneous computational and memory resources, in these experiments, all clients are assigned the same maximum rank. The random matrices used in the simulation for sketching are independently generated in each round. In the rank updating setting, the rank is updated every $100$ rounds, and $\tau=90\%$ energy is preserved. Although the algorithm allows layer-specific ranks, in all experiments we use a uniform rank configuration across adapted layers. Specifically, in the \homalgname experiments, we set $r_{i,l}=r_i=r$ for every $l$ and $i$. In the \hetalgname experiments, we set $r_i=\max_l r_{i,l}$ and $r=\max_i r_i$. Consequently, each client has a single scalar local rank shared by all adapted layers. Unless stated otherwise, we set the maximum adapter rank to $r=16$ and the sketch dimension $k=18$. For FlexLoRA and FLoRA, we employ the same rank adaptation mechanism as in \hetalgname. For FSLoRA, heterogeneous client ranks are sampled from the set $\{ 2,4,\cdots,14 \}$ and remain fixed throughout training. 

We set the total number of rounds to be $T=500$. We repeat each experiment using three random seeds and report the mean accuracy and standard deviation. 

\paragraph{Rank-homogeneous results}
In \cref{tab:homogeneous}, we compare the proposed method \homalgname with rank-homogeneous baselines, using $r=16$ and $k=18$. With one local step, \homalgname obtains results close to FedIT, while substantially outperforming FFA-LoRA. Increasing the number of local steps to $5$ improves all three methods. FFA-LoRA performs consistently worse than \homalgname and FedIT, suggesting that freezing one LoRA factor restricts the model's expressivity, particularly on more challenging tasks. 

\paragraph{Rank-heterogeneous results}
\cref{fig:het_comparison} presents the convergence behavior of \hetalgname, FlexLoRA, FLoRA, and FSLoRA on SST-2 and QNLI. FLoRA converges considerably more slowly, particularly on QNLI and when the local step is $1$. This behavior can be explained by the fact that FLoRA integrates the updated global model with the backbone model and reinitializes the local adapters in each round. Using $5$ local steps substantially accelerates convergence and improves final performance for all methods. \hetalgname achieves performance comparable to or slightly better than FlexLoRA, without the need to reconstruct all clients' updates, which incurs a $\cO(d^2s)$ computational cost at the server.

\cref{tab:choice_of_k} studies the influence of the adapter rank $r$ and the sketch dimension $k$ under $E=5$. We consider $r \in \{ 2,4,8,16 \}$ and compare $k = r+2$ with $k = 2r$. For a fixed rank $r$, increasing the sketch dimension from $k=r+2$ to $2r$ produces little or no improvement. These results indicate that $k=r+2$ is sufficient to preserve the update information while requiring less communication than $k=2r$. The table also shows that small ranks are sufficient for SST-2 and QNLI, whereas the more challenging tasks, such as QQP and MNLI, benefit more from larger adapter ranks. 

\cref{fig:rank_evoluation} shows the adaptive rank behavior. Rank adaptation is performed every $100$ communication rounds. For the larger initial ranks, the average local rank decreases substantially across the communication rounds. This shows that the adapter updates demand a reduced rank as federated fine-tuning progresses. Comparing rank evolution across datasets, the results suggest that rank adaptation also reflects differences in task complexity. As task complexity increases, the adapter requires a higher rank to express the information contained.

\begin{table}[!tb]
    \centering
    \begin{tabular}{c|c|c|c|c|c}
        Local Steps & Method & SST-2 & QNLI & QQP & MNLI  \\ \hline
        \multirow{3}{*}{$E=1$} & FedIT & $95.1 \pm 0.4$ & $89.5 \pm 0.8$ & $83.8 \pm 2.2$ & $85.1 \pm 2.2$ \\
          & FFA-LoRA & $92.0 \pm 0.4$ & $75.3 \pm 2.8$ & $76.6 \pm 1.9$ & $57.5 \pm 1.0$ \\
          & \homalgname & $95.0 \pm 0.2$ & $88.8 \pm 0.9$ & $83.1 \pm 2.2$  & $84.6 \pm 0.5$ \\ \hline
        \multirow{4}{*}{$E=5$} & FedIT & $95.4 \pm 0.2$ & $93.5 \pm 0.1$ & $88.1 \pm 0.4$ & $89.2 \pm 0.4$ \\
          & FFA-LoRA & $94.4 \pm 0.3$ & $88.9 \pm 0.7$ & $83.5 \pm 1.0$ & $84.9 \pm 0.4$ \\
          & \homalgname & $95.7 \pm 0.5$ & $93.7 \pm 0.2$ & $87.9 \pm 0.4$ & $ 89.6 \pm 0.3$\\
    \end{tabular}
    \caption{We report the accuracy ($\%$) in the rank-homogeneous setting. \homalgname uses $r=16$ and $k=18$. }
    \label{tab:homogeneous}
\end{table}

\begin{figure}[!tb]
    \centering
    \includegraphics[width=0.9\linewidth]{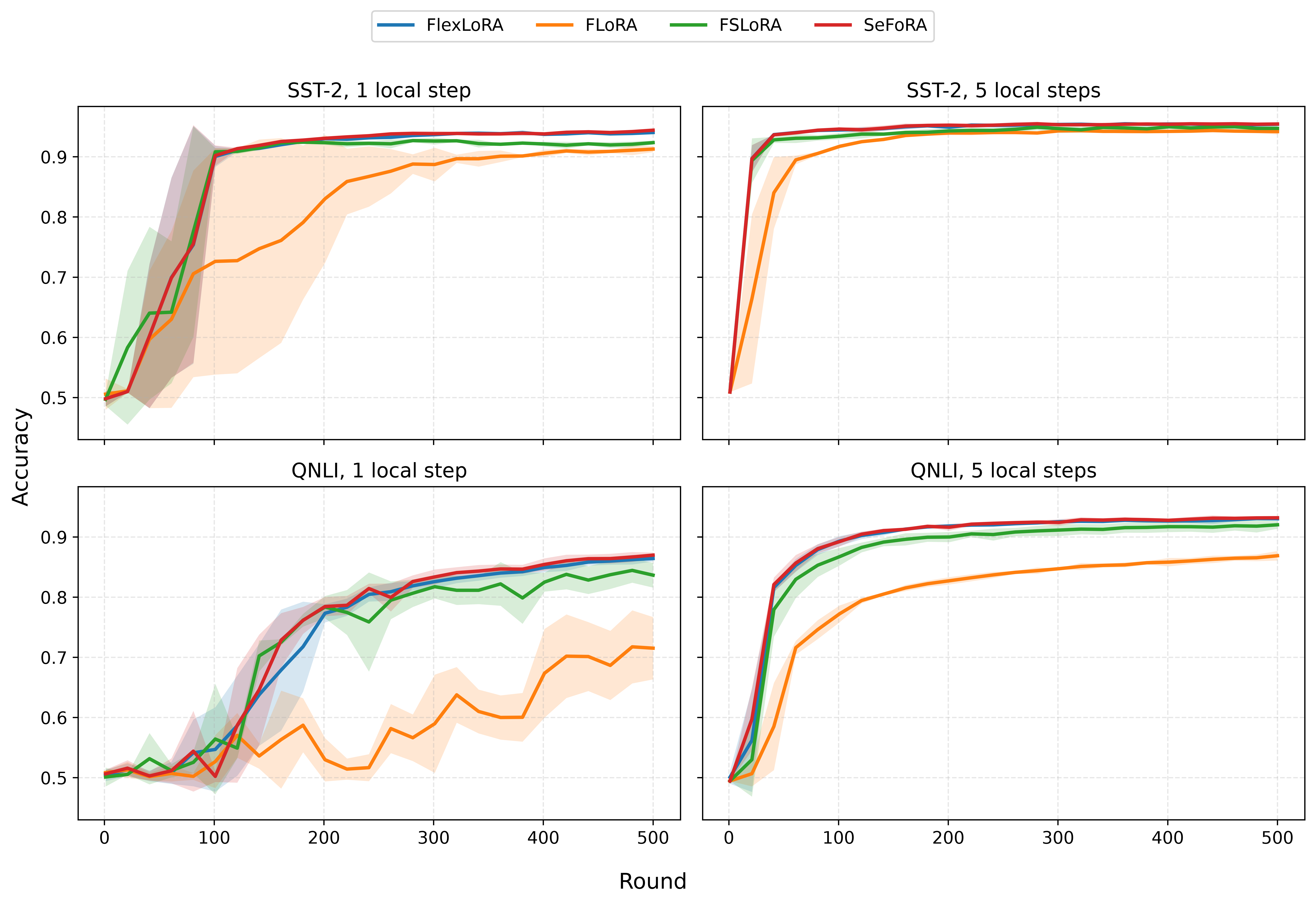}
    \caption{We report the accuracy in the rank-heterogeneous setting. We use $r=16$ and $k=18$ for \hetalgname. FSLoRA clients use fixed ranks sampled from $\{ 2,4,\cdots,14 \}$. }
    \label{fig:het_comparison}
\end{figure}

\begin{table}[!tb]
    \centering
    \begin{tabular}{c|c|c|c|c|c}
        rank ($r$) & $k$ & SST-2 & QNLI & QQP &  MNLI  \\ \hline
        \multirow{2}{*}{$r=16$} &  $32$ & $95.5 \pm 0.1$ & $93.3 \pm 0.3$ & $87.6 \pm 0.4$ & $89.5 \pm 0.2$ \\ 
        & $18$ & $95.5 \pm 0.1$ & $93.2 \pm 0.2$ & $87.7 \pm 0.4$ & $89.4 \pm 0.3$ \\ \hline
        \multirow{2}{*}{$r=8$} & $16$ & $95.4 \pm 0.2$ & $93.1 \pm 0.2$ & $87.4 \pm 0.4$ & $89.2 \pm 0.3$ \\ 
        & $10$ & $95.5 \pm 0.2$ & $93.2 \pm 0.1$ & $87.4 \pm 0.4$ & $89.1 \pm 0.3$ \\ \hline
        \multirow{2}{*}{$r=4$} & $8$ & $95.4 \pm 0.4$ & $93.0 \pm 0.1$ & $87.0 \pm 0.5$ &  $88.9 \pm 0.3$ \\ 
        & $6$ & $95.5 \pm 0.1$ & $93.0 \pm 0.4$ & $87.0 \pm 0.5$ & $88.9 \pm 0.1$ \\ \hline
        $r=2$ & $4$ & $95.3 \pm 0.6$ & $93.0 \pm 0.3$ & $86.5 \pm 0.6$ & $88.3 \pm 0.1$ \\ 
    \end{tabular}
    \caption{We report the accuracy ($\%$) to illustrate the sensitivity of \hetalgname to the rank $r$ and sketch dimension $k$, conducting $5$ local steps. We vary $r \in \{ 2,4,8,16 \}$ and $k \in \{ r+2, 2r \}$.}
    \label{tab:choice_of_k}
\end{table}

\begin{figure}[!tb]
    \centering
    \includegraphics[width=\linewidth]{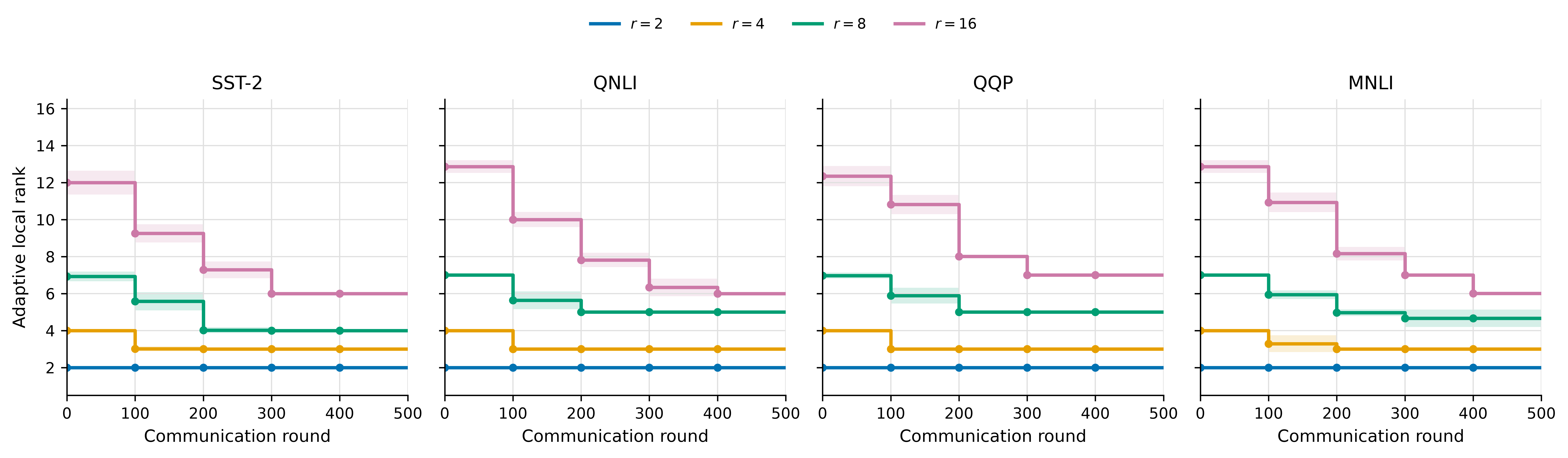}
    \caption{We report the evolution of the clients' average adaptive rank for \hetalgname using $E=5$ local steps and $k= r+2$. Local ranks are updated every $100$ communication rounds.}
    \label{fig:rank_evoluation}
\end{figure}

\section{Related Work}
\label{sec:related}

\paragraph{Parameter-efficient fine-tuning (PEFT).}
PEFT methods adapt a pretrained model $\model_0$ by updating only a small, structured subset of parameters. They fall into three families: \emph{additive} (inserting trainable modules~\cite{houlsbyParameterEfficientTransferLearning2019,heUnifiedViewParameterEfficient2021,liPrefixTuningOptimizingContinuous2021,lesterPowerScaleParameterEfficient2021}), \emph{selective} (updating a subset of existing weights~\cite{zakenBitFitSimpleParameterefficient2022,liaoParameterEfficientFineTuningIntroducing2023}), and \emph{reparametrized} (imposing low-dimensional structure on the update~\cite{hu2022lora,liuDoRAWeightDecomposedLowRank2024,tianHydraLoRAAsymmetricLoRA2025,zhang2023adalora,valipour2023dylora}); see~\cite{hanParameterEfficientFineTuningLarge2024} for a survey.

Among reparametrized methods, LoRA~\cite{hu2022lora} is perhaps the most popular method due to its simplicity and negligible inference overhead. The weight updates are parametrized as $\Update = \B\A$ with \emph{factor matrices} $\B \in \mathbb{R}^{d \times r}$, $\A \in \mathbb{R}^{r \times s}$, $r \ll \min(d,s)$, thus reducing the number of trainable parameters from $ds$ to $(d+s)r$. %
Beyond the original LoRA formulation \cite{hu2022lora}, numerous variants have been proposed to improve adaptation efficiency, convergence, and parameter utilization. LoRA+ \cite{hayou2024loraplus} accelerates optimization by assigning different learning rates to the two low-rank matrices. DoRA \cite{liu2024dora} improves fine-tuning performance by decomposing pretrained weights into magnitude and directional components, allowing LoRA to optimize only the directional updates. Other methods focus on improving parameter efficiency through adaptive resource allocation. LoRA-drop \cite{zhou2025loradrop} prunes redundant LoRA parameters based on their contribution to the model output, whereas AdaLoRA \cite{zhang2023adalora} dynamically reallocates the adaptation rank across layers according to their relative importance under a fixed parameter budget. HydraLoRA~\cite{tianHydraLoRAAsymmetricLoRA2025} modifies the factorization structure. Furthermore, LoRA has been adopted in application-specific settings, such as automatic prompt engineering for logical reasoning \cite{chen2024llamalora}.
On the theory side, \cite{kim2025lora} shows that $\ell_2$-regularized LoRA is equivalent to nuclear-norm-regularized full fine-tuning subject to a rank constraint.

\paragraph{PEFT and federated learning}
Federated learning (FL) has become the de facto paradigm for distributed machine learning, enabling collaborative model training while keeping data decentralized. By allowing clients to train models locally without sharing raw data, FL offers better privacy guarantees and has consequently found widespread adoption in privacy-sensitive applications, including finance \cite{long2020openbanking,chatterjee2023financial} and healthcare \cite{feng2022specificity,jiang2023fair,feng2023fedprompt,yan2024crossmodal}. Motivated by the success of large language models (LLMs), recent research has extended FL to support their efficient fine-tuning and deployment \cite{qu2024survey,fan2023fatellm,zhang2023fedit}. Several frameworks have subsequently been proposed to accelerate research in this emerging area. FederatedScope-LLM \cite{kuang2024federatedscope} provides a comprehensive platform for federated fine-tuning and instruction tuning of LLMs, while OpenFedLLM \cite{ye2024openfedllm} offers a unified framework for training LLMs over decentralized private data and systematically studies federated instruction tuning and value alignment. Complementing these systems, FedLLM-Bench \cite{ye2024fedllmbench} introduces realistic benchmarks and extensive empirical evaluations for federated fine-tuning, enabling standardized assessment of optimization algorithms and communication-efficient methods. Together, these works have established the foundational infrastructure, benchmarks, and evaluation methodologies that drive the rapidly growing field of federated fine-tuning of LLMs.

Combining PEFT with FL enables for adapting large foundation models on vast amount of data while substantially reducing communication overhead, computational complexity, and privacy risks. Existing approaches span adapter-based methods \cite{cai2023efficient,ghiasvand2024communication}, prompt tuning \cite{zhao2023fedprompt,qiu2023text}, selective parameter optimization \cite{yu2023bridging}, and low-rank gradient subspace optimization~\cite{peng2026rethinking}. More recently, Low-Rank Adaptation (LoRA)~\cite{hu2022lora} has become a popular PEFT technique on its own and in FL. 

\paragraph{Federated LoRA.}
Early works such as FedIT \cite{zhang2023fedit} integrate LoRA into the standard FedAvg framework for federated instruction tuning, while subsequent methods address the limitations of naive LoRA aggregation under heterogeneous data and system conditions. However, combining LoRA with FL raises several challenges faces challenges that do not exist in centralized LoRA or in federated averaging alone. Namely: the error introduced by the bilinear mismatch resulting from first aggregating the $\A_i$'s and $\B_i$'s and then computing $\Updatebar = (\sum_ip_i \B_i)(\sum_i p_i\A_i)$ as done in FedIT~\cite{zhang2023fedit}; or the rank-$r$ approximation error resulting from first computing $\Update = \sum_i p_i \B_i \A_i$ which is not necessarily rank-$r$ and then computing a rank-$r$ approximation, e.g., using SVD decomposition as in FlexLoRA~\cite{bai2024federated}. In addition, the proposed aggregation rule should account for the \emph{rank heterogeneity} problem: when clients use different ranks $r_i$, the factor matrices have mismatched dimensions, rendering direct factor averaging infeasible. To overcome those hurdles, several approaches have been proposed, as we detail in the sequel.
Beyond model parameterization, several recent works have focused on improving the optimization process itself. FedBCGD \cite{liu2024fedbcgd} introduces an accelerated block coordinate gradient descent algorithm, FedSWA \cite{liu2024fedswa} employs stochastic weight averaging to improve generalization under highly non-IID data, FedAdamW \cite{liu2025fedadamw} proposes a communication-efficient AdamW-inspired optimizer for federated large models, FedNSAM \cite{liu2025fednsam} analyzes the relationship between local and global sharpness to improve optimization consistency, FedMuon \cite{liu2025fedmuon} accelerates convergence through matrix orthogonalization, and DP-FedPGN \cite{liu2025dpfedpgn} encourages globally flatter minima in differentially private FL by penalizing gradient norms. Client data heterogeneity is tackled in~\cite{yan2025federated}. In LA-LoRA~\cite{liu2026rethinking}, during local iterations, the clients alternately update $\B$ and $\A$ and send both updated adapters to the federator for aggregation. This alternation is shown to provide stability during training and to allow for differential privacy with lower noise.  FedASK~\cite{wen2025fedask} uses a double-sketching approach primarily to provide a differential privacy guarantee, but does not account for rank-heterogeneity. Robustness to label noise in federated LoRA is studied in~\cite{fang2026towards}. Personalized federated adaptation is also considered, e.g., \cite{zhang2026personalized,lu2026fdlora}.

\paragraph{Tackling the bilinear mismatch}
The bilinear mismatch introduces an error of $O(\eta^2)$ per round for a single local step and grows as $O(\eta^2 E^2)$ for multiple local steps~\cite{wang2024flora,singhal2024fedex} and is more prominent when considering differential privacy guarantees. In~\cite{sun2024improving}, FFA-LoRA freezes the initialized adapter $\A$ and trains only the adapter $\B$. As such, the federator avoids the bilinear mismatch problem. However, freezing the adapter $\A$ reduces the expressiveness of the adaptation and affects the performance. 
In FedSVD~\cite{lee2026fedsvd}, the clients only train $\B$ locally, while the global adapter $\A^{(t)}$ is computed based on an SVD of the matrix $\B^{(t)}\A^{(t-1)}$; thus enabling the update of the factor $\A$ while clients still train and aggregate only $\B$. In a similar spirit, FedSA-LoRA~\cite{guo2025selective} trains $\A$ globally while each client trains $\B$ locally. A formal convergence analysis; however, since $\B$ is different for each client, the algorithm is suitable for personalized FL.
The concurrent works introducing
LoRA-A$^2$~\cite{koo2025towards} and RoLoRA~\cite{chen2026robust} alternate between freezing $\A$ training $\B$, and training $\A$ freezing $\B$. In LoRA-A$^2$, the federator selects an adaptive rank for each client to handle data heterogeneity. The client updates only a subset of all possible ranks, using a mask to perform the update without modifying the dimensionality of the adapters; however, the paper lacks a theoretical foundation. In RoLoRA, all clients maintain the same rank, and a convergence guarantee is given. 
In~\cite{zhang2026fedrotlora}, the authors propose FedRot-LoRA, which takes a different approach: aligning local subspaces via orthogonal rotations at the clients' side before aggregating. FedEx-LoRA~\cite{singhal2024fedex} keeps the LoRA factors trainable across rounds and absorbs the mismatch into the frozen pretrained weights as a residual term; however, it increases both computation and communication overheads and does not provide a convergence guarantee. This reduces the bilinear mismatch error but does not eliminate it. LoRA-FAIR~\cite{bian2024lorafair} introduces a correction term in $\B$ to prevent the $\widebar{\B}\widebar{\A}$ from deviating too much from the actual update $\sum_i \B_i \sum_i \A_i$. Despite their advantages, those methods are not applicable in rank-heterogeneous settings. 

\paragraph{Rank-heterogeneous federated LoRA}  In~\cite{babakniya2023slora}, the authors investigate methods for finding a good initialization of the adapters via full fine-tuning and then running federated LoRA, without altering the federated LoRA process. Adapting the federation of LoRA to account for rank heterogeneity is proving to be more successful. The authors of \cite{wang2024flora} propose FLoRA, in which the federator stacks the received $\A_i$'s and $\B_i$'s from the clients in a way that the multiplication of the stacked matrices emulates the computation and aggregation of the local $\Update_i$'s. As such, the globally updated $\A$ and $\B$ have larger dimensions, and the resulting $\Update$ has a rank that could be larger than $r$; thus increasing the communication and computation cost at the clients. Furthermore, in each round, the updated global model is integrated into the base model and the adapters are reinitialized, thereby potentially losing information about the adapters obtained from the previous rounds. Bai et al.~\cite{bai2024federated} took another approach in their FlexLoRA. The federator computes and aggregates the clients' $\Update_i$'s, and, for each client, the federator uses SVD to decompose and send a rank-$r_i$ approximation of the global $\Update$. This method incurs an $O(d^2s)$ computational cost at the server, and the paper analyzes the number of samples the clients need to have to obtain a good generalization bound, but it does not give a proof of convergence for the algorithm. Similarly, in FedMomentum~\cite{yan2026fedmomentum}, the federator computes and aggregates the local $\Update_i$'s to obtain $\Update$ and then performs a few mathematical transformations on $\Update$ to efficiently compute an approximation of its SVD decomposition. A key difference in this work is computing the global adapters as $\B = \widetilde{\U}\Sigma^{1/2}$ and $\A = \Sigma^{1/2}\V$, where $\widetilde{\U}$, $\Sigma$ and $\V$ are the matrices obtained from the approximation of the SVD of $\Update$. The computation complexity at the federator is $O(dN^2r^2)$. Also, no convergence guarantee is given. Similarly, in FedSRD~\cite{yan2026fedsrd} the clients apply local sparsification and pruning techniques to reduce the ranks of their local adapters, and the federator reconstructs the clients' full update $\B_i \A_i$ and aggregates in the full-rank space, which increases the computation at the server. The server then computes the SVD of $\Update$ to compute a low-rank update and uses sparsification techniques to further reduce the communication cost; this work also lacks a convergence analysis. In HetLoRA~\cite{cho2024heterogeneous}, the clients prune the adapters to smaller ranks by adding a regularization factor to the optimization problem. Then, the federator zero-pads the clients' $\A_i$'s and $\B_i$'s to ensure they have the same dimension. However, when aggregating the zero-padded adapters, a careful weighted average is performed. For each adapter, a weight $p_i$ proportional to the norm of the singular value vector of $\Update_i$ is computed. The computational complexity of this method is $O(Ndrs)$ for computing the global update. Similarly, no convergence guarantee is given. FSLoRA~\cite{fang2025federated} modifies the update to $\B\mathbf{S}\A$ to embed the sketching matrices $\S_i$ at the clients and account for rank heterogeneity. The matrices $\mathbf{S}_i$ are $r\times r$ diagonal matrices with only $r_i$ non-zero entries chosen at random. The algorithm can be theoretically proven to converge with a rate of $O(\widetilde{L}/\sqrt{NT})$ when run for $T$ iterations. However, the choice of which $r_i$ entries are non-zero is random and not based on the properties of client's data.
The authors of~\cite{byun2025towards} replace the zero-padding by replicating columns of $\A$ and $\B$ to make the matrices have the same ranks. In RB-LoRA~\cite{ha2026rb}, a unifying framework for completing the local adapters (via replication) and stacking them for computation at the federator is proposed. A client's local adapters are weighted based on the amount of data the client holds, proportional to the total number of data (for $\A$) and to the total number of data held by clients with ranks less than or equal to the client's (for $\B$). The method's performance is evaluated via numerical experiments.
In~\cite{zhou2025aflora}, the proposed AFLoRA adds a diagonal matrix $\boldsymbol{\Lambda}$ of dimension $r\times r$ to the clients' local adapters to have $\Update_i = \B_i \boldsymbol{\Lambda}_i \A_i$. The diagonal matrix is meant to compute the contribution of each dimension of $\B_i$ to the local $\Update$. Then, the clients accordingly adjust their local ranks. The adapter $\A$ is only trained at the federator using a public dataset, and the $\B_i$ and $\boldsymbol{\Lambda}_i$ are trained at the clients. To aggregate the local adapters, zero padding is used, and aggregation weights proportional to the clients' data volume and local LoRA ranks are computed. FedARA~\cite{wu2026adaptive} adaptively allocates local ranks to each layer at clients, and layers with small ranks are not updated. Through a majority vote at the federator, global layers will also be pruned. However, aggregating client updates still requires zero-padding and matrices of the same dimension.
Recent work has further examined the biases caused by heterogeneous ranks~\cite{wu2026preventing,peng2025fedhl}. In~\cite{wu2026preventing}, rank collapse is identified, i.e., the energy of the aggregated update progressively concentrates on the smallest rank shared by the participating clients. They propose raFLoRA, which splits each local update into rank partitions and aggregates them separately. This prevents higher-rank components from being diluted by clients who do not update them. Nevertheless, the server reconstructs the partitioned updates in the full model space and applies an SVD to recover global factors, incurring computational overhead at the server. FedHL~\cite{meng2026florg} focuses on the truncation-induced bias arising in federated LoRA with heterogeneous ranks. The method maintains a full-rank global model as a calibrated aggregation basis, and the truncation bias can be eliminated by using a compensation mechanism.

In~\cite{senarath2026subspace}, the authors introduce a subspace regularization term in the objective, making sure the subspaces spanned by the clients' low-rank updates stay aligned. However, in each round, the clients send $\Update_i =\B_i \A_i$ to the federator, increasing the communication overhead between the clients and the federator.

{
\small
\bibliographystyle{unsrtnat}  
\bibliography{refs}
}

\appendix
\section{Algorithmic representation of \homalgname}
\label{sec:sefora_app}
\homalgname is summarized in \cref{alg:sefora}.

\begin{algorithm}[!h]
\caption{\homalgname} %
\label{alg:sefora}
\begin{algorithmic}[1]
\Require The pretrained model $\model_0$. Total rounds $T$ and local steps $E$. In total, $N$ clients, each holding their local dataset $\Dataset_i$. The aggregation weights $p_i = |\Dataset_i|/\sum_{i=1}^N |\Dataset_i|$. The global rank profile $\mathbf{r} = (r_{1},\cdots,r_{L})$ and $k_l >r_l+1$ for every $l \in [L]$. Set $r_{\max}\triangleq \max_{l \in [L]} r_l$, and $k_{\max}\triangleq \max_{l \in [L]} k_l$. Initialize global matrices $\B_l^{(0)} \in \mathbb{R}^{d \times r_l}$ as a full zero matrix and $\A_l^{(0)} \in \mathbb{R}^{r_l \times s}$ as a random Gaussian matrix. $\TupleB^{(0)} = (\B_1^{(0)}, \cdots, \B_L^{(0)})$ and $\TupleA^{(0)} = ( \A_1^{(0)}, \cdots, \A_L^{(0)})$. A function $\Call{Unsketching}{\Y,\Z, \mPsi,\mOme}$ that unsketch the sketch matrices (\cref{alg:low_rank_app}).

\For{round $t = 0, 1, \cdots, T-1$}
\State Federator initializes $\mPsi^{(t+1)} \in \mathbb{R}^{k_{\max} \times d} $ and $\mOme^{(t+1)} \in \mathbb{R}^{s \times r_{\max}}$ from a standard normal distribution, and send the shared seed that generates $\mPsi^{(t+1)}$ and $\mOme^{(t+1)}$ to all clients.  
\State Federator broadcasts $\TupleB^{(t)}$ and $\TupleA^{(t)}$ to all clients.    
\ForAll{clients $i$ \textbf{in parallel}}
\ForAll{adapted layers $l \in [L]$}
\State Extract $\mOme_l^{(t+1)} \!\!= \!\mOme^{(t+1)}_{:,1:r_l} \!\in\!\mathbb R^{s\times r_l}$ and $\mPsi_l^{(t+1)} \!\!= \!\mPsi^{(t+1)}_{1:k_l,:} \!\in\!\mathbb R^{k_l \times d}$ from the shared seed.
        \State Obtain $\B_l^{(t)}$ and $\A_l^{(t)}$.
    
    \State Initialize local LoRA factors 
        $\B_{i,l}^{(t)}=\B_l^{(t)}$  and $\A_{i,l}^{(t)}=\A_l^{(t)}$.
    \EndFor
    \State Using $\B_{i,l}^{(t)}$ and $\A_{i,l}^{(t)}$, perform $E$ local steps to obtain $\B_{i,l}^{(t+1)}$ and $\A_{i,l}^{(t+1)}$ for all $l \in [L]$.
    \ForAll{adapter layer $l \in [L]$}
    \State Compute the sketch matrices
        \begin{align*}
            \Y^{(t+1)}_{i,l}&= \Update^{(t+1)}_{i,l} \mOme_l^{(t+1)} = \B_{i,l}^{(t+1)} (\A_{i,l}^{(t+1)} \mOme_l^{(t+1)}) \in \mathbb{R}^{d \times r_l}, \\
            \Z^{(t+1)}_{i,l}&=\mPsi_l^{(t+1)} \Update^{(t+1)}_{i,l} = (\mPsi_l^{(t+1)} \B_{i,l}^{(t+1)}) \A_{i,l}^{(t+1)} \in \mathbb{R}^{k_l \times s}
        \end{align*}
        \EndFor
        \State Send $\TupleY^{(t+1)}_{i} = ( \Y^{(t+1)}_{i,1}, \cdots, \Y^{(t+1)}_{i,L} )$ and $\TupleZ^{(t+1)}_{i} = ( \Z^{(t+1)}_{i,1}, \cdots, \Z^{(t+1)}_{i,L} )$ to federator.
        \EndFor
        \ForAll{adapter layer $l \in [L]$}
        \State Federator aggregates $\Y_l^{(t+1)} =\sum_{i \in [N]} p_i \Y^{(t+1)}_{i,l} \; \text{ and }\; 
                \Z_l^{(t+1)} = \sum_{i \in [N]} p_i \Z^{(t+1)}_{i,l}$.
        \State $(\B_l^{(t+1)}, \A_l^{(t+1)}) \gets \Call{Unsketching}{\Y_l^{(t+1)}, \Z_l^{(t+1)}, \mPsi_l^{(t+1)}, \mOme_l^{(t+1)}}$
        \EndFor
    \EndFor
    \State \textbf{Return} $(\TupleB^{(T)}, \TupleA^{(T)})$
\end{algorithmic}
\end{algorithm}

\section{Proof of \cref{thm:FOSP_local1} and \cref{cor:tail}}
In the proofs below, we adopt the following notations: 

Let $\mathcal{F}_t$ represent all algorithmic history up to the beginning of round $t$. We define $\mathbb{E}_t[\cdot] \triangleq \mathbb{E}[\cdot \mid \mathcal{F}_t]$ as the conditional expectation taking into account all sources of randomness occurring during round $t$. Let $\mathbb{E}_{\xi_i}[\cdot]$ denote the expectation taken with respect to the local mini-batch sampling at round $t$, conditioned on the model state at the moment of sampling.

We define 
\begin{align*}
    g_{\TupleA}^{(t)}& =\nabla_{\TupleA} \Loss^{\text{lora}} (\TupleB^{(t)}, \TupleA^{(t)}), &\quad  g_{\TupleB}^{(t)} &=\nabla_{\TupleB} \Loss^{\text{lora}} (\TupleB^{(t)}, \TupleA^{(t)}) \\
    g_{\TupleA,i}^{(t)} &= \nabla_{\TupleA} \Loss_i^{\text{lora}} (\TupleB^{(t)}, \TupleA^{(t)}), & \quad  g_{\TupleB,i}^{(t)} &= \nabla_{\TupleB} \Loss_i^{\text{lora}} (\TupleB^{(t)}, \TupleA^{(t)}) \\
    \hat{g}_{\TupleA,i}^{(t,e)} &= \nabla_{\TupleA} \Loss_{i}^{\text{lora}}(\TupleB_{i}^{(t,e)}, \TupleA_{i}^{(t,e)}; \xi_{i}^{(e)}) , & \quad \hat{g}_{\TupleB,i}^{(t,e)} &= \nabla_{\TupleB} \Loss_{i}^{\text{lora}}(\TupleB_{i}^{(t,e)}, \TupleA_{i}^{(t,e)}; \xi_{i}^{(e)}) \\
    g_{\TupleA,i}^{(t,e)} &= \nabla_{\TupleA} \Loss_{i}^{\text{lora}}(\TupleB_{i}^{(t,e)}, \TupleA_{i}^{(t,e)}) , & \quad g_{\TupleB,i}^{(t,e)} &= \nabla_{\TupleB} \Loss_{i}^{\text{lora}}(\TupleB_{i}^{(t,e)}, \TupleA_{i}^{(t,e)}) \\
    \widebar{g}_{\TupleA,i}^{(t)} & = \frac{1}{E} \sum_{e=0}^{E-1} \hat{g}_{\TupleA,i}^{(t,e)}, & \quad \widebar{g}_{\TupleB,i}^{(t)} &= \frac{1}{E} \sum_{e=0}^{E-1} \hat{g}_{\TupleB,i}^{(t,e)} \\
    D_{\TupleA}^{(t)}& =\sum_{i \in [N]} p_i \mathbb{E}_t \left[ \widebar{g}_{\TupleA,i}^{(t)} \right] - g_{\TupleA}^{(t)}, & \quad D_\TupleB^{(t)} &=\sum_{i \in [N]} p_i \mathbb{E}_t \left[ \widebar{g}_{\TupleB,i}^{(t)} \right] - g_{\TupleB}^{(t)} \\
    & = \sum_{i \in [N]} p_i \mathbb{E}_t \left[ \widebar{g}_{\TupleA,i}^{(t)} \right] - g_{\TupleA,i}^{(t)}, & \quad &= \sum_{i \in [N]} p_i \mathbb{E}_t \left[ \widebar{g}_{\TupleB,i}^{(t)}\right] - g_{\TupleB,i}^{(t)} \text{\quad (Drift Terms)}
\end{align*}
 
\begin{lem}
\label{lem:model_local_drift}
    Under \cref{assump:variance} and \cref{assump:normbound}, the local drift of the LoRA factors $\TupleB$ and $\TupleA$, and the update $\TupleUpdate$ at the local step $e \in \{ 0, \cdots, E-1 \}$, and for any round $t \in \{ 0, \cdots, T-1 \}$ and client $i \in [N]$ in \homalgname are bounded as:
        \begin{align*}
            \mathbb{E} \left\| \TupleB_{i}^{(t,e)} - \TupleB^{(t)} \right\|_F^2 & \leq (\eta^{(t)})^2 e^2 \chi^2 M_{\A}^2, \\
            \mathbb{E} \left\| \TupleA_{i}^{(t,e)} - \TupleA^{(t)} \right\|_F^2 & \leq (\eta^{(t)})^2  e^2 \chi^2 M_{\B}^2, \quad \text{and} \\
            \mathbb{E} \left\| \TupleUpdate_{i}^{(t,e)} - \TupleUpdate^{(t)} \right\|_F^2 & \leq 2 (\eta^{(t)})^2  e^2 \chi^2 (M_{\B}^4 + M_{\A}^4).
        \end{align*}
    
\begin{proof}
    The local fine-tuning starts from $(\TupleB_{i}^{(t,0)},\TupleA_{i}^{(t,0)})=(\TupleB^{(t)},\TupleA^{(t)})$ and updates
    \begin{align*}
        \TupleB_{i}^{(t,e+1)} &= \TupleB_{i}^{(t,e)} - \eta^{(t)} \hat{g}_{\TupleB,i}^{(t,e)}, \\
        \TupleA_{i}^{(t,e+1)} &= \TupleA_{i}^{(t,e)} - \eta^{(t)} \hat{g}_{\TupleA,i}^{(t,e)},
    \end{align*}
    where we define $\hat{g}_{\TupleA,i}^{(t,e)} \!=\! \nabla_{\TupleA} \Loss_{i}^{\text{full}}(\TupleB_{i}^{(t,e)}, \TupleA_{i}^{(t,e)}; \xi_{i}^{(e)}) \!=\! (\TupleB_{i}^{(t,e)})^\top \cdot \nabla_{\TupleUpdate} \Loss_{i}^{\text{full}}(\TupleUpdate_{i}^{(t,e)}; \xi_{i}^{(e)})$ and $\hat{g}_{\TupleB,i}^{(t,e)} \!=\! \nabla_{\TupleB} \Loss_{i}^{\text{full}}(\TupleB_{i}^{(t,e)}, \TupleA_{i}^{(t,e)}; \xi_{i}^{(e)}) \!=\! \nabla_{\TupleUpdate} \Loss_{i}^{\text{full}}(\TupleUpdate_{i}^{(t,e)}; \xi_{i}^{(e)}) \cdot (\TupleA_{i}^{(t,e)})^\top$. Thus, for $e \in \{ 0, \cdots, E\!-\!1 \}$,
    \begin{align*}
        \mathbb{E} \left\| \TupleB_{i}^{(t,e)} - \TupleB^{(t)} \right\|_F^2 & = \mathbb{E} \left\| - \eta^{(t)} \sum_{j=0}^{e-1}  \hat{g}_{\TupleB,i}^{(t,j)} \right\|_F^2 \\
        & = \mathbb{E} \left[ (\eta^{(t)})^2 \cdot \left\| \sum_{j=0}^{e-1}  \hat{g}_{\TupleB,i}^{(t,j)} \right\|_F^2 \right] \\
        & \stackrel{(a)}{\leq} (\eta^{(t)})^2 e \cdot \sum_{j=0}^{e-1} \mathbb{E} \left\| \hat{g}_{\TupleB,i}^{(t,j)} \right\|_F^2 \\
        & \stackrel{(b)}{\leq} (\eta^{(t)})^2 e \cdot \sum_{j=0}^{e-1} \mathbb{E} \left\| \nabla_{\TupleUpdate} \Loss_{i}^{\text{full}}(\TupleUpdate_{i}^{(t,j)}; \xi_{i}^{(j)}) \right\|_F^2 \cdot \left\| \TupleA_{i}^{(t,j)} \right\|_2^2 \\
        & \stackrel{(c)}{\leq} (\eta^{(t)})^2 e^2 \chi^2 M_{\A}^2 \\
    \end{align*}
    where $(a)$ holds due to the Jensen's inequality, $(b)$ employs $\|XY\|_F^2 \leq \|X\|_2^2 \cdot \|Y\|_F^2$, and $(c)$ utilizes \cref{assump:variance} and \cref{assump:normbound}. Similarly, $\mathbb{E} \left\| \TupleA_{i}^{(t,e)} - \TupleA^{(t)} \right\|_F^2 \leq (\eta^{(t)})^2  e^2 \chi^2 M_{\B}^2$.
    Then,
    \begin{align*}
        \mathbb{E} \left\| \TupleUpdate_{i}^{(t,e)} - \TupleUpdate^{(t)} \right\|_F^2 & = \mathbb{E} \left\| \TupleB_{i}^{(t,e)} \TupleA_{i}^{(t,e)} - \TupleB^{(t)}\TupleA^{(t)} \right\|_F^2 \\
        & = \mathbb{E} \left\| \TupleB_{i}^{(t,e)} (\TupleA_{i}^{(t,e)} - \TupleA^{(t)}) + (\TupleB_{i}^{(t,e)} - \TupleB^{(t)}) \TupleA^{(t)} \right\|_F^2 \\
        & \leq 2\mathbb{E} \left\| \TupleB_{i}^{(t,e)} \right\|_F^2 \left\| \TupleA_{i}^{(t,e)} - \TupleA^{(t)} \right\|_F^2  + 2\mathbb{E} \left\| \TupleB_{i}^{(t,e)} - \TupleB^{(t)} \right\|_F^2 \left\| \TupleA^{(t)} \right\|_F^2 \\
        & \leq 2 M_{\B}^2 \mathbb{E} \left\| \TupleA_{i}^{(t,e)} - \TupleA^{(t)} \right\|_F^2  + 2 M_{\A}^2 \mathbb{E} \left\| \TupleB_{i}^{(t,e)} - \TupleB^{(t)} \right\|_F^2 \\
        & \leq 2 (\eta^{(t)})^2  e^2 \chi^2 (M_{\B}^4 + M_{\A}^4) 
    \end{align*}
\end{proof}
\end{lem}

\begin{lem}
\label{lem:drift}
For round $t \in \{ 0,\cdots, T-1\}$, under \cref{assump:smooth}, \cref{assump:variance} and \cref{assump:normbound}, the following bounds hold in \homalgname:
    $$\| D_{\TupleA}^{(t)} \|_F^2 \leq \frac{(E-1)(2E-1)}{3} \chi^2 (\eta^{(t)})^2 \left(\chi^2 M_{\A}^2  + 2 \mu^2 (M_{\B}^4 + M_{\A}^4)\right),$$
    and
    $$\| D_{\TupleB}^{(t)} \|_F^2 \leq \frac{(E-1)(2E-1)}{3} \chi^2 (\eta^{(t)})^2 \left(\chi^2 M_{\B}^2 + 2 \mu^2 M_{\A}^2 (M_{\B}^4 + M_{\A}^4)\right).$$

\begin{proof}
    We prove the bound on $\TupleA$; the $\TupleB$ bound follows by the symmetric argument.
    The Frobenius norm of the drifts $\| D_{\TupleA}^{(t)} \|_F^2$ is bounded as:
    \begin{align*}
        & \left\| D_{\TupleA}^{(t)} \right\|_F^2  = \left\| \sum_{i \in [N]} p_i \mathbb{E}_t \left[ \widebar{g}_{\TupleA,i}^{(t)} \right] - g_{\TupleA}^{(t)} \right\|_F^2 \\
        & = \left\| \sum_{i \in [N]} p_i \frac{1}{E} \sum_{e=0}^{E-1} \mathbb{E}_t \left[  \hat{g}_{\TupleA,i}^{(t,e)} \right] - \sum_{i \in [N]} p_i \frac{1}{E} \sum_{e=0}^{E-1} g_{\TupleA,i}^{(t)} \right\|_F^2 \\
        & = \left\| \sum_{i \in [N]} p_i \frac{1}{E} \sum_{e=0}^{E-1} \mathbb{E}_t \left[  g_{\TupleA,i}^{(t,e)} - g_{\TupleA,i}^{(t)} \right] \right\|_F^2  \\
        & \stackrel{(a)}{\leq}  \sum_{i \in [N]} p_i \frac{1}{E} \sum_{e=0}^{E-1} \mathbb{E}_t \left\|   g_{\TupleA,i}^{(t,e)} - g_{\TupleA,i}^{(t)} \right\|_F^2  \\
        & = \sum_{i \in [N]} p_i \frac{1}{E} \sum_{e=0}^{E-1} \mathbb{E}_t \left\| (\TupleB_{i}^{(t,e)})^\top \cdot \nabla_{\TupleUpdate} \Loss_{i}^{\text{full}}(\TupleUpdate_{i}^{(t,e)}) - (\TupleB^{(t)})^\top \nabla_{\TupleUpdate} \Loss_i^{\text{full}} (\TupleUpdate^{(t)})\right\|_F^2  \\
        & = \sum_{i \in [N]} p_i \frac{1}{E} \sum_{e=0}^{E-1} \mathbb{E}_t \left\| \left( (\TupleB_{i}^{(t,e)})^\top - (\TupleB^{(t)})^\top \right)  \cdot \nabla_{\TupleUpdate} \Loss_{i}^{\text{full}}(\TupleUpdate_{i}^{(t,e)}) + (\TupleB^{(t)})^\top \cdot \left( \nabla_{\TupleUpdate} \Loss_{i}^{\text{full}}(\TupleUpdate_{i}^{(t,e)}) - \nabla_{\TupleUpdate} \Loss_i^{\text{full}} (\TupleUpdate^{(t)})\right\|_F^2 \right)  \\
        & \stackrel{(b)}{\leq} 2 \sum_{i \in [N]} p_i \frac{1}{E} \sum_{e=0}^{E-1} \mathbb{E}_t \left\| \TupleB_{i}^{(t,e)} - \TupleB^{(t)}\right\|_F^2 \left\| \nabla_{\TupleUpdate} \Loss_{i}^{\text{full}}(\TupleUpdate_{i}^{(t,e)}) \right\|_F^2 \\
        & \quad + 2\sum_{i \in [N]} p_i \frac{1}{E} \sum_{e=0}^{E-1} \mathbb{E}_t \left\|\TupleB^{(t)} \right\|_2^2 \left\| \nabla_{\TupleUpdate} \Loss_{i}^{\text{full}}(\TupleUpdate_{i}^{(t,e)}) - \nabla_{\TupleUpdate} \Loss_i^{\text{full}} (\TupleUpdate^{(t)}) \right\|_F^2   \\
        & \stackrel{(c)}{\leq} 2 \sum_{i \in [N]} p_i \chi^2 \frac{1}{E} \sum_{e=0}^{E-1} \mathbb{E}_t \left\| \TupleB_{i}^{(t,e)} - \TupleB^{(t)}\right\|_F^2 + 2 \cdot \sum_{i \in [N]} p_i \mu^2 \frac{1}{E} \sum_{e=0}^{E-1} \mathbb{E}_t \left\| \TupleUpdate_{i}^{(t,e)} - \TupleUpdate^{(t)} \right\|_F^2   \\
        & \stackrel{(d)}{\leq} 2 \sum_{i \in [N]} p_i \chi^2 \frac{1}{E} \sum_{e=0}^{E-1} (\eta^{(t)})^2  e^2 \chi^2 M_{\A}^2  + 2 \cdot \sum_{i \in [N]} p_i \mu^2 \frac{1}{E} \sum_{e=0}^{E-1} 2 (\eta^{(t)})^2  e^2 \chi^2 (M_{\B}^4 + M_{\A}^4)\\
        & = 2 \chi^2 (\eta^{(t)})^2 \left(\chi^2 M_{\A}^2  + 2 \mu^2 (M_{\B}^4 + M_{\A}^4)\right) \cdot \frac{1}{E} \sum_{e=0}^{E-1} e^2 \\
        & = \frac{(E-1)(2E-1)}{3} \chi^2 (\eta^{(t)})^2 \left(\chi^2 M_{\A}^2  + 2 \mu^2 (M_{\B}^4 + M_{\A}^4)\right)
    \end{align*}
    where $(a)$ holds by applying Jensen's inequality three times, $(b)$ holds since $\|XY\|_F^2 \leq \|X\|_2^2 \cdot \|Y\|_F^2 \leq \|X\|_F^2 \cdot \|Y\|_F^2$, $(c)$ holds since $\left\| \nabla_{\TupleUpdate} \Loss_{i}^{\text{full}}(\TupleUpdate_{i}^{(t,e)}) \right\|_F^2 = \left\| \mathbb{E}_{\xi_i} \left[ \nabla_{\TupleUpdate} \Loss_{i}^{\text{full}}(\TupleUpdate_{i}^{(t,e)}; \xi_i) \right] \right\|_F^2 \leq \mathbb{E}_{\xi_i} \left\| \nabla_{\TupleUpdate} \Loss_{i}^{\text{full}}(\TupleUpdate_{i}^{(t,e)}; \xi_i) \right\|_F^2 \leq \chi^2$ (\cref{assump:variance}), \cref{assump:smooth} and \cref{assump:normbound}, and $(d)$ holds due to \cref{lem:model_local_drift} and $\sum_{e=1}^{E-1} e^2 = \frac{(E-1)E(2E-1)}{6}$. Similarly, 
    \begin{align*}
        \| D_{\TupleB}^{(t)} \|_F^2 & \leq 2 \sum_{i \in [N]} p_i \chi^2 \frac{1}{E} \sum_{e=0}^{E-1} \mathbb{E}_t \left\| \TupleA_{i}^{(t,e)} - \TupleA^{(t)}\right\|_F^2 + 2 M_{\A}^2 \cdot \sum_{i \in [N]} p_i \mu^2 \frac{1}{E} \sum_{e=0}^{E-1} \mathbb{E}_t \left\| \TupleUpdate_{i}^{(t,e)} - \TupleUpdate^{(t)} \right\|_F^2 \\
        & \leq \frac{(E-1)(2E-1)}{3} \chi^2 (\eta^{(t)})^2 \left(\chi^2 M_{\B}^2 + 2 \mu^2 M_{\A}^2 (M_{\B}^4 + M_{\A}^4)\right).
    \end{align*}  
\end{proof}
\end{lem}

\begin{lem}
\label{lem:crossterm_gradient}
For each client $i \in [N]$ and round $t \in \{ 0,\cdots, T-1\}$, under \cref{assump:variance} and \cref{assump:normbound}, the following bounds hold in \homalgname:
    $\mathbb{E}_t \left\| \widebar{g}_{\TupleB,i}^{(t)} \, \widebar{g}_{\TupleA,i}^{(t)} \right\|_F^2 
    \;\leq\; M_{\A}^2 M_{\B}^2 \kappa^4;$
\begin{proof}
    \begin{align*}
        \mathbb{E}_t \Big\| \bar{g}_{\TupleB,i}^{(t)}  \cdot &\bar{g}_{\TupleA,i}^{(t)} \Big\|_F^2 = \mathbb{E}_t \left[ \left\| \frac{1}{E^2} \sum_{e=0}^{E-1} \hat{g}_{\TupleB,i}^{(t,e)}  \cdot \sum_{e=0}^{E-1} \hat{g}_{\TupleA,i}^{(t,e)}  \right\|_F^2 \right] \\
        & \stackrel{(a)}{\leq} \frac{1}{E^4}  E^2 \cdot \sum_{e_1,e_2} \mathbb{E}_t \left[ \left\| \hat{g}_{\TupleB,i}^{(t,e_1)}  \cdot  \hat{g}_{\TupleA,i}^{(t,e_2)}  \right\|_F^2 \right] \\
        & = \frac{1}{E^2} \cdot \sum_{e_1,e_2} \mathbb{E}_t \left[ \left\| \nabla_{\TupleUpdate} \Loss_{i}^{\text{full}}(\TupleUpdate_{i}^{(t,e_1)}; \xi_{i}^{(e_1)}) \cdot (\TupleA_{i}^{(t,e_1)})^\top \cdot (\TupleB_{i}^{(t,e_2)})^\top \cdot \nabla_{\TupleUpdate} \Loss_{i}^{\text{full}}(\TupleUpdate_{i}^{(t,e_2)}; \xi_{i}^{(e_2)}) \right\|_F^2 \right] \\
        & = \frac{1}{E^2} \sum_{e_1,e_2} \mathbb{E}_t \left[ \sum_{l=1}^L \left\| \nabla_{\Update} \Loss_i^{\text{full}} (\Update_{i,l}^{(t,e_1)}; \xi_{i}^{(e_1)}) (\A_{i,l}^{(t,e_1)})^\top  \cdot (\B_{i,l}^{(t,e_2)})^\top \nabla_{\Update} \Loss_i^{\text{full}} (\Update_{i,l}^{(t,e_2)}; \xi_{i}^{(e_2)}) \right\|_F^2 \right] \\
        & \stackrel{(b)}{\leq} \frac{1}{E^2} \sum_{e_1,e_2} \mathbb{E}_t \left[ \sum_{l=1}^L \left\| \nabla_{\Update} \Loss_i^{\text{full}} (\Update_{i,l}^{(t,e_1)}; \xi_{i}^{(e_1)}) \right\|_F^2 \cdot \left\| \A_{i,l}^{(t,e_1)} \right\|_2^2 \cdot \left\| \B_{i,l}^{(t,e_2)} \right\|_2^2 \cdot \left\| \nabla_{\Update} \Loss_i^{\text{full}} (\Update_{i,l}^{(t,e_2)}; \xi_{i}^{(e_2)}) \right\|_F^2 \right] \\
        & \stackrel{(c)}{\leq} \frac{M_{\A}^2 M_{\B}^2}{E^2} \sum_{e_1,e_2} \mathbb{E}_t \left[ \sum_{l=1}^L \left\| \nabla_{\Update} \Loss_i^{\text{full}} (\Update_{i,l}^{(t,e_1)}; \xi_{i}^{(e_1)}) \right\|_F^2 \cdot \left\| \nabla_{\Update} \Loss_i^{\text{full}} (\Update_{i,l}^{(t,e_2)}; \xi_{i}^{(e_2)}) \right\|_F^2 \right] \\
        & \stackrel{(d)}{\leq} \frac{M_{\A}^2 M_{\B}^2}{E^2} \sum_{e_1,e_2} \mathbb{E}_t \left[ \left\| \nabla_{\TupleUpdate} \Loss_i^{\text{full}} (\TupleUpdate_{i}^{(t,e_1)}; \xi_{i}^{(e_1)}) \right\|_F^2 \left\| \nabla_{\TupleUpdate} \Loss_i^{\text{full}} (\TupleUpdate_{i}^{(t,e_2)}; \xi_{i}^{(e_2)}) \right\|_F^2 \right] \\
        & \stackrel{(e)}{\leq} \frac{M_{\A}^2 M_{\B}^2}{E^2} \sum_{e_1,e_2} \sqrt{\mathbb{E}_t \left[ \left\| \nabla_{\TupleUpdate} \Loss_i^{\text{full}} (\TupleUpdate_{i}^{(t,e_1)}; \xi_{i}^{(e_1)}) \right\|_F^4 \right] \mathbb{E}_t \left[ \left\| \nabla_{\TupleUpdate} \Loss_i^{\text{full}} (\TupleUpdate_{i}^{(t,e_2)}; \xi_{i}^{(e_2)}) \right\|_F^4 \right]} \\
        & \stackrel{(f)}{\leq} \frac{M_{\A}^2 M_{\B}^2}{E^2} \sum_{e_1,e_2} \sqrt{\kappa^4 \cdot \kappa^4} \\
        & \leq M_{\A}^2 M_{\B}^2 \kappa^4,
    \end{align*}
    where in $(a)$ holds due to Jensen's inequality, in $(b)$ we employ $\|XY\|_F^2 \leq \|X\|_2^2 \cdot \|Y\|_F^2 \leq \|X\|_F^2 \cdot \|Y\|_F^2$, $(c)$ holds due to \cref{assump:normbound}, $(d)$ holds because $\sum x_l^2 y_l^2 \leq \sum x_l^2 \sum y_l^2 \leq (\sum x_l)^2 (\sum y_l)^2$ for $x_l \geq 0$, $(e)$ holds due to Cauchy-Schwarz inequality, and $(f)$ uses \cref{assump:variance}.
\end{proof}
\end{lem}

\begin{lem}
\label{lem:average_gradients}
For each client $i \in [N]$ and round $t \in \{ 0,\cdots, T-1\}$ in \homalgname, under \cref{assump:variance} and \cref{assump:normbound},
    $\mathbb{E}_t \left\| \widebar{g}_{\TupleA,i}^{(t)} \right\|_F^2 \leq M_{\B}^2 \chi^2$ \quad and \quad 
    $\mathbb{E}_t \left\| \widebar{g}_{\TupleB,i}^{(t)} \right\|_F^2 \leq M_{\A}^2 \chi^2.$
\begin{proof}
We prove $\mathbb{E}_t \| \widebar{g}_{\TupleA,i}^{(t)} \|_F^2 \leq M_\B^2 \chi^2$ in the rank-homogeneous case. The bound on $\widebar{g}_{\TupleB,i}^{(t)}$ is symmetric.
\begin{align*}
    \mathbb{E}_t \left\| \bar{g}_{\TupleA,i}^{(t)} \right\|_F^2 & = \frac{1}{E^2} \mathbb{E}_t \left\| \sum_{e=0}^{E-1} \hat{g}_{\TupleA,i}^{(t,e)} \right\|_F^2 \\
    & \stackrel{(a)}{\leq} \frac{1}{E^2} E \sum_{e=0}^{E-1} \mathbb{E}_t \left\|  \hat{g}_{\TupleA,i}^{(t,e)} \right\|_F^2 \\
    & = \frac{1}{E}\sum_{e=0}^{E-1} \mathbb{E}_t \left\|  (\TupleB_{i}^{(t,e)})^\top \cdot \nabla_{\TupleUpdate} \Loss_{i}^{\text{full}}(\TupleUpdate_{i}^{(t,e)}; \xi_{i}^{(e)}) \right\|_F^2 \\
    & \stackrel{(b)}{\leq} \frac{1}{E} \sum_{e=0}^{E-1} \|\TupleB_{i}^{(t,e)}\|_2^2\mathbb{E}_t \left\| \nabla_{\TupleUpdate} \Loss_{i}^{\text{full}}(\TupleUpdate_{i}^{(t,e)}; \xi_{i}^{(e)}) \right\|_F^2 \\
    & \leq M_{\B}^2 \chi^2,
\end{align*}
where $(a)$ is Jensen's inequality and $(b)$ uses $\|XY\|_F^2 \leq \|X\|_2^2 \|Y\|_F^2$. The bound on $\widebar{g}_{\TupleB,i}^{(t)}$ is identical with the roles of $\TupleA$ and $\TupleB$ swapped.
\end{proof}
\end{lem}

\subsection{Proof of \cref{thm:FOSP_local1}}
\label{app:theorem}
\begin{proof}
\label{proof:homogeneous}
    Since the full objective is $\mu$-smooth with respect to $\TupleUpdate$, we have
    \begin{align}
    \label{eq:global_descent}
        &\mathbb{E}_t \left[ \Loss^{\text{full}} (\TupleUpdate_{\homalgname}^{(t+1)}) \right] \\
        \nonumber \leq & \mathbb{E}_t \left[ \Loss^{\text{full}} (\TupleUpdate^{(t)}) \right]+ \underbrace{ \mathbb{E}_t \left[ \left\langle \nabla_{\TupleUpdate} \Loss^{\text{full}} (\TupleUpdate^{(t)}), \TupleUpdate_{\homalgname}^{(t+1)} - \TupleUpdate^{(t)} \right\rangle \right]}_{\text{Term 1}} + \underbrace{\frac{\mu}{2} \mathbb{E}_t \left[ \| \TupleUpdate_{\homalgname}^{(t+1)} - \TupleUpdate^{(t)} \|_F^2 \right]}_{\text{Term 2}},
    \end{align}
    Suppose the global LoRA factors at round $t$ are $\TupleA^{(t)}$ and $\TupleB^{(t)}$, the global model obtained by \homalgname at round $t+1$ is a low-rank approximation of the true aggregation result $\sum_{i \in [N]} p_i \TupleB_i^{(t+1)} \TupleA_i^{(t+1)}$. 
    The local fine-tuning starts from $(\TupleB_{i}^{(t,0)},\TupleA_{i}^{(t,0)})=(\TupleB^{(t)},\TupleA^{(t)})$ and updates
    \begin{align*}
        \TupleB_{i}^{(t,e+1)} &= \TupleB_{i}^{(t,e)} - \eta^{(t)} \hat{g}_{\TupleB,i}^{(t,e)}, \\
        \TupleA_{i}^{(t,e+1)} &= \TupleA_{i}^{(t,e)} - \eta^{(t)} \hat{g}_{\TupleA,i}^{(t,e)}.
    \end{align*}
    The global model at round $t+1$ can be expressed as
    \begin{align*}
        \TupleUpdate_{\homalgname}^{(t+1)} & = \TupleB_{\homalgname}^{(t+1)} \TupleA_{\homalgname}^{(t+1)} \\
        & = \sum_{i \in [N]} p_i \TupleB_i^{(t+1)} \TupleA_i^{(t+1)} + \mathcal{E}_{\text{sketch}}^{(t+1)} \\
        & = \sum_{i \in [N]} p_i \TupleB_{i}^{(t,E)} \TupleA_{i}^{(t,E)} + \mathcal{E}_{\text{sketch}}^{(t+1)} \\
        & = \sum_{i \in [N]} p_i \left( \TupleB^{(t)} - \eta^{(t)} E \widebar{g}_{\TupleB,i}^{(t)} \right) \left( \TupleA^{(t)} - \eta^{(t)} E \widebar{g}_{\TupleA,i}^{(t)} \right) + \mathcal{E}_{\text{sketch}}^{(t+1)} \\
        & = \TupleB^{(t)}\TupleA^{(t)} - \eta^{(t)} E \sum_{i \in [N]} p_i \TupleB^{(t)} \widebar{g}_{\TupleA,i}^{(t)} - \eta^{(t)} E \sum_{i \in [N]} p_i \widebar{g}_{\TupleB,i}^{(t)} \TupleA^{(t)} + (\eta^{(t)})^2 E^2 \sum_{i \in [N]} p_i \widebar{g}_{\TupleB,i}^{(t)} \widebar{g}_{\TupleA,i}^{(t)} + \mathcal{E}_{\text{sketch}}^{(t+1)},
    \end{align*}
    where $\mathbb{E}_t \| \mathcal{E}_{\text{sketch}}^{(t+1)} \|_F^2 \leq (1+\frac{r}{k-r-1}) \cdot \min_{\varrho < r-1} (1+\frac{\varrho}{r-\varrho-1}) \cdot \mathbb{E}_t \left[ \tau_{\varrho+1}^2(\sum_{i \in [N]} p_i \TupleB_i^{(t+1)} \TupleA_i^{(t+1)}) \right]$ (\cref{lem:low_rank_error}). Thus, the model update 
    \begin{align}
    \label{eq:update_multistep}
        \TupleUpdate_{\homalgname}^{(t+1)} - \TupleUpdate^{(t)} = - \eta^{(t)} E \sum_{i \in [N]} p_i \TupleB^{(t)} \widebar{g}_{\TupleA,i}^{(t)} - \eta^{(t)} E \sum_{i \in [N]} p_i \widebar{g}_{\TupleB,i}^{(t)} \TupleA^{(t)} + (\eta^{(t)})^2 E^2 \sum_{i \in [N]} p_i \widebar{g}_{\TupleB,i}^{(t)} \widebar{g}_{\TupleA,i}^{(t)} + \mathcal{E}_{\text{sketch}}^{(t+1)}.
    \end{align}

    \paragraph{For Term 1:} 
    From \cref{eq:update_multistep}, we can write 
    \begin{align*}
        & \mathbb{E}_t\left[ \TupleUpdate_{\homalgname}^{(t+1)} - \TupleUpdate^{(t)} \right] \\
        = &  - \mathbb{E}_t\left[ \eta^{(t)} E \sum_{i \in [N]} p_i \TupleB^{(t)} \widebar{g}_{\TupleA,i}^{(t)} \right] - \mathbb{E}_t\left[\eta^{(t)} E \sum_{i \in [N]} p_i \widebar{g}_{\TupleB,i}^{(t)}\TupleA^{(t)} \right] + \mathbb{E}_t\left[(\eta^{(t)})^2 E^2 \sum_{i \in [N]} p_i \widebar{g}_{\TupleB,i}^{(t)} \widebar{g}_{\TupleA,i}^{(t)} \right] + \mathbb{E}_t\left[ \mathcal{E}_{\text{sketch}}^{(t+1)} \right] \\
        = & - \eta^{(t)} E \cdot \TupleB^{(t)} \cdot \sum_{i \in [N]} p_i \mathbb{E}_t \left[ \widebar{g}_{\TupleA,i}^{(t)} \right] - \eta^{(t)} E \cdot \sum_{i \in [N]} p_i \mathbb{E}_t \left[ \widebar{g}_{\TupleB,i}^{(t)} \right] \cdot \TupleA^{(t)} + (\eta^{(t)})^2 E^2 \mathbb{E}_t\left[ \sum_{i \in [N]} p_i \widebar{g}_{\TupleB,i}^{(t)} \widebar{g}_{\TupleA,i}^{(t)} \right] + \mathbb{E}_t\left[ \mathcal{E}_{\text{sketch}}^{(t+1)} \right] \\
        = & - \eta^{(t)} E \cdot \TupleB^{(t)} \cdot (D_{\TupleA}^{(t)}+g_{\TupleA}^{(t)}) - \eta^{(t)} E \cdot (D_\TupleB^{(t)}+g_{\TupleB}^{(t)}) \cdot \TupleA^{(t)} + (\eta^{(t)})^2 E^2 \mathbb{E}_t\left[ \sum_{i \in [N]} p_i \widebar{g}_{\TupleB,i}^{(t)} \widebar{g}_{\TupleA,i}^{(t)} \right] + \mathbb{E}_t\left[ \mathcal{E}_{\text{sketch}}^{(t+1)} \right] \\
        = & - \eta^{(t)} E \TupleB^{(t)} g_{\TupleA}^{(t)} - \eta^{(t)} E g_{\TupleB}^{(t)} \TupleA^{(t)} - \eta^{(t)} E \TupleB^{(t)} D_{\TupleA}^{(t)}  - \eta^{(t)} E D_{\TupleB}^{(t)} \TupleA^{(t)} + (\eta^{(t)})^2 E^2 \mathbb{E}_t\left[ \sum_{i \in [N]} p_i \widebar{g}_{\TupleB,i}^{(t)} \widebar{g}_{\TupleA,i}^{(t)} \right] + \mathbb{E}_t\left[ \mathcal{E}_{\text{sketch}}^{(t+1)} \right].
    \end{align*}
    Thus,
    \begin{align*}
        & \mathbb{E}_t \left[ \left\langle \nabla_{\TupleUpdate} \Loss^{\text{full}} (\TupleUpdate^{(t)}), \TupleUpdate_{\homalgname}^{(t+1)} - \TupleUpdate^{(t)} \right\rangle \right] \\ 
        = & \left\langle \nabla_{\TupleUpdate} \Loss^{\text{full}} (\TupleUpdate^{(t)}), \mathbb{E}_t\left[ \TupleUpdate_{\homalgname}^{(t+1)} - \TupleUpdate^{(t)} \right]\right\rangle \\
        = &  - \eta^{(t)} E \cdot \left\langle \nabla_{\TupleUpdate} \Loss^{\text{full}} (\TupleUpdate^{(t)}), \TupleB^{(t)} g_{\TupleA}^{(t)} \right\rangle - \eta^{(t)} E \cdot \left\langle \nabla_{\TupleUpdate} \Loss^{\text{full}} (\TupleUpdate^{(t)}),  g_{\TupleB}^{(t)}\TupleA^{(t)} \right\rangle \\
        \nonumber & - \eta^{(t)} E \cdot \left\langle \nabla_{\TupleUpdate} \Loss^{\text{full}} (\TupleUpdate^{(t)}), \TupleB^{(t)} D_{\TupleA}^{(t)} \right\rangle - \eta^{(t)} E \cdot \left\langle \nabla_{\TupleUpdate} \Loss^{\text{full}} (\TupleUpdate^{(t)}),  D_{\TupleB}^{(t)}\TupleA^{(t)} \right\rangle \\
        \nonumber & + (\eta^{(t)})^2 E^2\left\langle \nabla_{\TupleUpdate} \Loss^{\text{full}} (\TupleUpdate^{(t)}), \mathbb{E}_t\left[ \sum_{i \in [N]} p_i \bar{g}_{\TupleB,i}^{(t)} \bar{g}_{\TupleA,i}^{(t)} \right]\right\rangle + \left\langle \nabla_{\TupleUpdate} \Loss^{\text{full}} (\TupleUpdate^{(t)}), \mathbb{E}_t\left[ \mathcal{E}_{\text{sketch}}^{(t+1)} \right] \right\rangle \\
        \nonumber \stackrel{(a)}{=} & - \eta^{(t)} E \cdot \left( \| g_{\TupleA}^{(t)} \|_F^2 +  \| g_{\TupleB}^{(t)} \|_F^2 \right) + \eta^{(t)} E \cdot \left\langle g_{\TupleA}^{(t)}, - D_{\TupleA}^{(t)} \right\rangle + \eta^{(t)} E \cdot \left\langle g_{\TupleB}^{(t)}, - D_{\TupleB}^{(t)} \right\rangle \\
        & + (\eta^{(t)})^2 E^2 \left\langle \nabla_{\TupleUpdate} \Loss^{\text{full}} (\TupleUpdate^{(t)}), \mathbb{E}_t\left[ \sum_{i \in [N]} p_i \bar{g}_{\TupleB,i}^{(t)} \bar{g}_{\TupleA,i}^{(t)} \right]\right\rangle + \left\langle \nabla_{\TupleUpdate} \Loss^{\text{full}} (\TupleUpdate^{(t)}), \mathbb{E}_t\left[ \mathcal{E}_{\text{sketch}}^{(t+1)} \right] \right\rangle \\
        \stackrel{(b)}{\leq} & - \eta^{(t)} E \cdot \left( \| g_{\TupleA}^{(t)} \|_F^2 +  \| g_{\TupleB}^{(t)} \|_F^2 \right) + \eta^{(t)} E \cdot (\frac{1}{2}\| g_{\TupleA}^{(t)}\|_F^2 + \frac{1}{2}\| D_{\TupleA}^{(t)} \|_F^2) + \eta^{(t)} E \cdot (\frac{1}{2}\| g_{\TupleB}^{(t)}\|_F^2 + \frac{1}{2}\| D_{\TupleB}^{(t)} \|_F^2) \\
        & + (\eta^{(t)})^2 E^2 \cdot \left\| \nabla_{\TupleUpdate} \Loss^{\text{full}} (\TupleUpdate^{(t)}) \right\|_F \cdot \left\| \mathbb{E}_t\left[ \sum_{i \in [N]} p_i \bar{g}_{\TupleB,i}^{(t)} \bar{g}_{\TupleA,i}^{(t)} \right] \right\|_F  + \left\| \nabla_{\TupleUpdate} \Loss^{\text{full}} (\TupleUpdate^{(t)}) \right\|_F \cdot \left\| \mathbb{E}_t\left[ \mathcal{E}_{\text{sketch}}^{(t+1)} \right] \right\|_F \\
        \stackrel{(c)}{\leq} & - \frac{1}{2} \eta^{(t)} E \cdot \left( \| g_{\TupleA}^{(t)} \|_F^2 +  \| g_{\TupleB}^{(t)} \|_F^2 \right) + \frac{\eta^{(t)} E}{2}\| D_{\TupleA}^{(t)} \|_F^2 + \frac{\eta^{(t)} E}{2}\| D_{\TupleB}^{(t)} \|_F^2 \\
        & + (\eta^{(t)})^2 E^2 \cdot \left\| \nabla_{\TupleUpdate} \Loss^{\text{full}} (\TupleUpdate^{(t)}) \right\|_F \cdot \sqrt{\mathbb{E}_t\left[ \left\|\sum_{i \in [N]} p_i \bar{g}_{\TupleB,i}^{(t)} \bar{g}_{\TupleA,i}^{(t)} \right\|_F^2 \right]} + \left\| \nabla_{\TupleUpdate} \Loss^{\text{full}} (\TupleUpdate^{(t)}) \right\|_F \cdot \sqrt{\mathbb{E}_t\left[ \left\| \mathcal{E}_{\text{sketch}}^{(t+1)} \right\|_F^2 \right] },
    \end{align*}
    where $(a)$ holds since the definition of the tuple inner product and the cyclic property of the trace, i.e., 
    \begin{align*}
        \left\langle \nabla_{\TupleUpdate} \Loss^{\text{full}} (\TupleUpdate^{(t)}), \TupleB^{(t)} g_{\TupleA}^{(t)} \right\rangle & = \sum_{i=1}^{L}\mathrm{tr} \left( {(g_{\A_l}^{(t)})}^\top {(\B_l^{(t)})}^\top \nabla_{\Update} \Loss^{\text{full}} (\Update_l^{(t)}) \right) = \sum_{i=1}^{L} \mathrm{tr} \left( {(g_{\A_l}^{(t)})}^\top g_{\A_l}^{(t)} \right) =\| g_{\TupleA}^{(t)} \|_F^2, \\
        \left\langle \nabla_{\TupleUpdate} \Loss^{\text{full}} (\TupleUpdate^{(t)}), g_{\TupleB}^{(t)} \TupleA^{(t)} \right\rangle & = \sum_{i=1}^{L}\mathrm{tr} \left( {(g_{\B_l}^{(t)})}^\top \nabla_{\Update} \Loss^{\text{full}} (\Update_l^{(t)}) {(\A_l^{(t)})}^\top  \right) = \sum_{i=1}^{L} \mathrm{tr} \left( {(g_{\B_l}^{(t)})}^\top g_{\B_l}^{(t)}  \right) = \| g_{\TupleB}^{(t)} \|_F^2,        
    \end{align*}
    and similarly, $\left\langle \nabla_{\TupleUpdate} \Loss^{\text{full}} (\TupleUpdate^{(t)}), \TupleB^{(t)} D_{\TupleA}^{(t)} \right\rangle = \left\langle g_{\TupleA}^{(t)}, D_{\TupleA}^{(t)} \right\rangle$, $\left\langle \nabla_{\TupleUpdate} \Loss^{\text{full}} (\TupleUpdate^{(t)}),  D_{\TupleB}^{(t)}\TupleA^{(t)} \right\rangle = \left\langle g_{\TupleB}^{(t)}, D_{\TupleB}^{(t)} \right\rangle$, $(b)$ holds since Young's inequality and $\langle X,Y \rangle \leq |\langle X,Y \rangle| \leq \|X\|_F \|Y\|_F$, and $(c)$ holds since Jensen's inequality.
    Since $\| D_{\TupleA}^{(t)} \|_F$ and $\| D_{\TupleB}^{(t)} \|_F$ are bounded by \cref{lem:drift}, and according to \cref{lem:crossterm_gradient}, we know $$\mathbb{E}_t\left[ \left\|\sum_{i \in [N]} p_i \bar{g}_{\TupleB,i}^{(t)} \bar{g}_{\TupleA,i}^{(t)} \right\|_F^2 \right] \leq \sum_{i \in [N]} p_i \mathbb{E}_t \left[ \left\| \bar{g}_{\TupleB,i}^{(t)}  \cdot \bar{g}_{\TupleA,i}^{(t)}  \right\|_F^2 \right] \leq M_{\A}^2 M_{\B}^2 \kappa^4,$$
    we only need to derive the bounds for $\left\| \nabla_{\TupleUpdate} \Loss^{\text{full}} (\TupleUpdate^{(t)}) \right\|_F$. Due to Jensen's inequality and the inequality that $\|\mathbb{E}[X]\|^2 \leq \mathbb{E}[\|X\|^2]$, we have
    \begin{align}
    \label{eq:chi}
        \nonumber \left\| \nabla_{\TupleUpdate} \Loss^{\text{full}} (\TupleUpdate^{(t)}) \right\|_F^2 & = \left\| \sum_{i \in [N]} p_i \nabla_{\TupleUpdate} \Loss_i^{\text{full}} (\TupleUpdate^{(t)}) \right\|_F^2 \\
        \nonumber & \leq \sum_{i \in [N]} p_i \left\| \nabla_{\TupleUpdate} \Loss_i^{\text{full}} (\TupleUpdate^{(t)}) \right\|_F^2 \\
        \nonumber & = \sum_{i \in [N]} p_i  \left\| \mathbb{E}_t \left[ \nabla_{\TupleUpdate} \Loss_i^{\text{full}} (\TupleUpdate^{(t)};\xi_i) \right] \right\|_F^2 \\
        \nonumber & \leq \sum_{i \in [N]} p_i  \mathbb{E}_t \left[ \left\| \nabla_{\TupleUpdate} \Loss_i^{\text{full}} (\TupleUpdate^{(t)};\xi_i) \right\|_F^2\right] \\
        & \leq \chi^2.
    \end{align}

    Hence, by combining the previous bounds and utilizing \cref{lem:crossterm_gradient}, Term 1 is bounded as
    \begin{align*}
        & \mathbb{E}_t \left[ \left\langle \nabla_{\TupleUpdate} \Loss^{\text{full}} (\TupleUpdate^{(t)}), \TupleUpdate_{\homalgname}^{(t+1)} - \TupleUpdate^{(t)} \right\rangle \right] \\
        & \leq - \frac{1}{2} \eta^{(t)} E \cdot \left( \| g_{\TupleA}^{(t)} \|_F^2 +  \| g_{\TupleB}^{(t)} \|_F^2 \right) + \frac{\eta^{(t)} E}{2}\| D_{\TupleA}^{(t)} \|_F^2 + \frac{\eta^{(t)} E}{2}\| D_{\TupleB}^{(t)} \|_F^2 \\
        & \quad + (\eta^{(t)})^2 E^2 \cdot \left\| \nabla_{\TupleUpdate} \Loss^{\text{full}} (\TupleUpdate^{(t)}) \right\|_F \cdot \sqrt{\mathbb{E}_t\left[ \left\|\sum_{i \in [N]} p_i \bar{g}_{\TupleB,i}^{(t)} \bar{g}_{\TupleA,i}^{(t)} \right\|_F^2 \right]} + \left\| \nabla_{\TupleUpdate} \Loss^{\text{full}} (\TupleUpdate^{(t)}) \right\|_F \cdot \sqrt{\mathbb{E}_t\left[ \left\| \mathcal{E}_{\text{sketch}}^{(t+1)} \right\|_F^2 \right] } \\
        & \leq - \frac{1}{2} \eta^{(t)} E \cdot \left( \| g_{\TupleA}^{(t)} \|_F^2 +  \| g_{\TupleB}^{(t)} \|_F^2 \right) + (\eta^{(t)})^3 \frac{(E-1)E(2E-1)}{6} \chi^2 \cdot \left(  \chi^2 ( M_{\A}^2 + M_{\B}^2) + 2 \mu^2 (1+M_{\A}^2) (M_{\B}^4 + M_{\A}^4) \right)  \\
        & \quad + (\eta^{(t)})^2 E^2 \chi \cdot M_{\A} M_{\B} \kappa^2 + \chi \cdot \sqrt{\mathbb{E}_t\left[ \left\| \mathcal{E}_{\text{sketch}}^{(t+1)} \right\|_F^2 \right] }.
    \end{align*}

    \paragraph{For Term 2:}
    Using \cref{eq:update_multistep}, we have 
    \begin{align*}
        & \mathbb{E}_t \left[ \| \TupleUpdate_{\homalgname}^{(t+1)} - \TupleUpdate^{(t)} \|_F^2 \right] \\
        = & \mathbb{E}_t \left[ \left\| - \eta^{(t)} E \sum_{i \in [N]} p_i \TupleB^{(t)} \widebar{g}_{\TupleA,i}^{(t)} - \eta^{(t)} E \sum_{i \in [N]} p_i \widebar{g}_{\TupleB,i}^{(t)} \TupleA^{(t)} + (\eta^{(t)})^2 E^2 \sum_{i \in [N]} p_i \widebar{g}_{\TupleB,i}^{(t)} \widebar{g}_{\TupleA,i}^{(t)} + \mathcal{E}_{\text{sketch}}^{(t+1)} \right\|_F^2 \right] \\
        \stackrel{(a)}{\leq} & 4 (\eta^{(t)})^2 E^2 \mathbb{E}_t \left[ \left\| \sum_{i \in [N]} p_i \TupleB^{(t)} \bar{g}_{\TupleA,i}^{(t)} \right\|_F^2 \right] + 4 (\eta^{(t)})^2 E^2 \mathbb{E}_t \left[ \left\| \sum_{i \in [N]} p_i \bar{g}_{\TupleB,i}^{(t)}\TupleA^{(t)} \right\|_F^2 \right] + 4 (\eta^{(t)})^4 E^4 \mathbb{E}_t \left[ \left\| \sum_{i \in [N]} p_i \bar{g}_{\TupleB,i}^{(t)} \bar{g}_{\TupleA,i}^{(t)} \right\|_F^2 \right] + 4 \mathbb{E}_t \left[ \| \mathcal{E}_{\text{sketch}}^{(t+1)} \|_F^2 \right] \\
        \stackrel{(b)}{\leq} & 4 (\eta^{(t)})^2 E^2 \| \TupleB^{(t)} \|_2^2 \cdot  \sum_{i \in [N]} p_i \mathbb{E}_t \left\| \bar{g}_{\TupleA,i}^{(t)} \right\|_F^2 + 4 (\eta^{(t)})^2 E^2 \sum_{i \in [N]} p_i \mathbb{E}_t \left\| \bar{g}_{\TupleB,i}^{(t)}\right\|_F^2 \cdot \| \TupleA^{(t)} \|_2^2 \\
        & + 4 (\eta^{(t)})^4 E^4 \sum_{i \in [N]} p_i \mathbb{E}_t \left[ \left\| \bar{g}_{\TupleB,i}^{(t)}  \cdot \bar{g}_{\TupleA,i}^{(t)}  \right\|_F^2 \right] + 4 \mathbb{E}_t \left[ \| \mathcal{E}_{\text{sketch}}^{(t+1)} \|_F^2 \right] \\
        \stackrel{(c)}{\leq} & 4 (\eta^{(t)})^2 E^2 \chi^2 (M_{\B}^2 + M_{\A}^4) + 4 (\eta^{(t)})^4 E^4 M_{\A}^2 M_{\B}^2 \kappa^4 + 4 \mathbb{E}_t \left[ \| \mathcal{E}_{\text{sketch}}^{(t+1)} \|_F^2 \right], \\
    \end{align*}
    where $(a)$ holds by grouping the first two terms and using $\|W + X + Y +Z\|_F^2 \leq 4 ( \| W \|_F^2 + \| X \|_F^2 + \| Y \|_F^2+ \| Z \|_F^2)$, $(b)$ holds due to $\|XY\|_F^2 \leq \|X\|_2^2 \cdot \|Y\|_F^2$ and Jensen's inequality, and $(c)$ holds from \cref{assump:variance}, \cref{assump:normbound}, \cref{lem:crossterm_gradient}, and \cref{lem:average_gradients}.
    
    By combining the bounds on Term 1 and Term 2, we have
    \begin{align*}
        &\mathbb{E}_t \left[ \Loss^{\text{full}} (\TupleUpdate_{\homalgname}^{(t+1)}) \right] - \mathbb{E}_t \left[ \Loss^{\text{full}} (\TupleUpdate^{(t)}) \right] \\
        & \leq \mathbb{E}_t \left[ \left\langle \nabla_{\TupleUpdate} \Loss^{\text{full}} (\TupleUpdate^{(t)}), \TupleUpdate_{\homalgname}^{(t+1)} - \TupleUpdate^{(t)} \right\rangle \right] + \frac{\mu}{2} \mathbb{E}_t \left[ \| \TupleUpdate_{\homalgname}^{(t+1)} - \TupleUpdate^{(t)} \|_F^2 \right] \\
        & \leq - \frac{1}{2} \eta^{(t)} E \cdot \left( \| g_{\TupleA}^{(t)} \|_F^2 + \| g_{\TupleB}^{(t)} \|_F^2 \right) + (\eta^{(t)})^2 E^2 \cdot \left( \chi M_{\A} M_{\B} \kappa^2 + 2\mu \chi^2 (M_{\B}^2 + M_{\A}^4) \right)  \\
        & \quad + (\eta^{(t)})^3 \frac{(E-1)E(2E-1)}{6} \cdot \chi^2 \left(  \chi^2 ( M_{\A}^2 + M_{\B}^2) + 2 \mu^2 (1+M_{\A}^2) (M_{\B}^4 + M_{\A}^4) \right) \\
        & \quad + (\eta^{(t)})^4 E^4 \cdot 2\mu M_{\A}^2 M_{\B}^2 \kappa^4 + \chi \cdot \sqrt{\mathbb{E}_t\left[ \left\| \mathcal{E}_{\text{sketch}}^{(t+1)} \right\|_F^2 \right] } + 2\mu \mathbb{E}_t \left[ \| \mathcal{E}_{\text{sketch}}^{(t+1)} \|_F^2 \right].
    \end{align*}

    Therefore, 
    \begin{align*}
         & \left\| \nabla_{\TupleA} \Loss^{\text{lora}} (\TupleB^{(t)}, \TupleA^{(t)}) \right\|_F^2 + \left\| \nabla_{\TupleB} \Loss^{\text{lora}} (\TupleB^{(t)}, \TupleA^{(t)}) \right\|_F^2  \leq  \frac{2\left( \mathbb{E}_t \left[ \Loss^{\text{full}} (\TupleUpdate^{(t)}) \right] - \mathbb{E}_t \left[ \Loss^{\text{full}} (\TupleUpdate_{\homalgname}^{(t+1)}) \right] \right)}{\eta^{(t)} E} \\
         & \quad \!+\! 2 \eta^{(t)} E \! \cdot \! \left( \chi M_{\A} M_{\B} \kappa^2 + 2\mu \chi^2 (M_{\B}^2 + M_{\A}^4) \right) + 2 (\eta^{(t)})^2 \frac{(E-1)(2E-1)}{6} \! \cdot \! \chi^2 \! \left(  \chi^2 ( M_{\A}^2 \!+\! M_{\B}^2) \!+\! 2 \mu^2 (1\!+\!M_{\A}^2) (M_{\B}^4 + M_{\A}^4) \right) \\
         & \quad  + 4(\eta^{(t)})^3 E^3 \cdot \mu M_{\A}^2 M_{\B}^2 \kappa^4 + \frac{2}{\eta^{(t)} E}\left( \chi \cdot \sqrt{\mathbb{E}_t\left[ \left\| \mathcal{E}_{\text{sketch}}^{(t+1)} \right\|_F^2 \right] } + 2\mu \mathbb{E}_t \left[ \| \mathcal{E}_{\text{sketch}}^{(t+1)} \|_F^2 \right] \right).
    \end{align*}

    By taking constant learning rate $\eta^{(t)}=\eta$, taking total expectation $\mathbb{E}[\cdot]$ on both sides, and summing this inequality from $t=0$ to $T-1$ and computing the average, we have
    \begin{align*}
        & \frac{1}{T} \sum_{t=0}^{T-1} \left( \mathbb{E} \left[ \left\| \nabla_{\TupleA} \Loss^{\text{lora}} (\TupleB^{(t)}, \TupleA^{(t)}) \right\|_F^2 \right] + \mathbb{E} \left[ \left\| \nabla_{\TupleB} \Loss^{\text{lora}} (\TupleB^{(t)}, \TupleA^{(t)}) \right\|_F^2 \right] \right)  \leq  \frac{\mathbb{E} \left[ \Loss^{\text{full}} (\TupleUpdate^{(0)}) \right] - \mathbb{E} \left[ \Loss^{\text{full}} (\TupleUpdate_{\homalgname}^{(T)}) \right]}{\frac{1}{2}T \eta E} \\
        & \quad + 2 \eta E \!\cdot\! \left( \chi M_{\A} M_{\B} \kappa^2 + 2\mu \chi^2 (M_{\B}^2 + M_{\A}^4) \right) \!+\! \frac{2}{3} \eta^2 E^2 \! \cdot \!\chi^2\! \left(  \chi^2 ( M_{\A}^2 \!+\! M_{\B}^2) \!+\! 2 \mu^2 (1+M_{\A}^2) (M_{\B}^4 \!+\! M_{\A}^4) \right) \\
        & \quad + 4\eta^3 E^3 \cdot \mu M_{\A}^2 M_{\B}^2 \kappa^4 + \frac{1}{T} \sum_{t=0}^{T-1}  \frac{2}{\eta E} \mathbb{E} \left[ \chi \cdot \sqrt{\mathbb{E}_t\left[ \left\| \mathcal{E}_{\text{sketch}}^{(t+1)} \right\|_F^2 \right] } + 2\mu \mathbb{E}_t \left[ \| \mathcal{E}_{\text{sketch}}^{(t+1)} \|_F^2 \right] \right].
    \end{align*}

Next, we rearrange the terms in the above bound. Define $C_1=\chi^2 M_\A^2 + 2\mu^2 (M_\B^4 + M_\A^4)$, $C_2 = \chi^2 M_\B^2 + 2\mu^2 M_\A^2 (M_\B^4 + M_\A^4)$, $C_3 = M_{\A} M_{\B} \kappa^2$, $C_4 = \mu \chi^2 (M_{\B}^2 + M_{\A}^4)$, the sketching floor $S_1^{(t+1)} = \chi \cdot \sqrt{\mathbb{E}_t\left[ \left\| \mathcal{E}_{\text{sketch}}^{(t+1)} \right\|_F^2 \right] } + 2\mu \mathbb{E}_t \left[ \| \mathcal{E}_{\text{sketch}}^{(t+1)} \|_F^2 \right]$, $\widebar{S_1} = \frac{1}{T} \sum_{t=0}^{T-1} \mathbb{E} \left[ S_1^{(t+1)} \right]$, and $\Delta_0 = \mathbb{E} \left[ \Loss^{\text{full}} (\TupleUpdate^{(0)}) \right] - \Loss^{\text{full}, \star} $.
We have
\begin{align*}
    & \frac{1}{T} \sum_{t=0}^{T-1} \left( \mathbb{E} \left[ \left\| \nabla_{\TupleA} \Loss^{\text{lora}} (\TupleB^{(t)}, \TupleA^{(t)}) \right\|_F^2 \right] + \mathbb{E} \left[ \left\| \nabla_{\TupleB} \Loss^{\text{lora}} (\TupleB^{(t)}, \TupleA^{(t)}) \right\|_F^2 \right] \right) \leq  \frac{\mathbb{E} \left[ \Loss^{\text{full}} (\TupleUpdate^{(0)}) \right] - \mathbb{E} \left[ \Loss^{\text{full}} (\TupleUpdate_{\homalgname}^{(T)}) \right]}{\frac{1}{2}T \eta E} \\
    & \quad + \eta E \cdot 2\left( \chi C_3 + 2C_4 \right) + \eta^2 E^2 \cdot \frac{2}{3} \chi^2 \left( C_1 + C_2 \right) + 4\eta^3 E^3 \cdot \mu C_3^2 + \frac{2}{\eta E} \cdot \frac{1}{T} \sum_{t=0}^{T-1} \mathbb{E} \left[ S_1^{(t+1)} \right]\\
    & = \frac{2\Delta_0}{\eta E T} + \frac{2}{\eta E} \widebar{S_1} + \eta E \cdot 2\left( \chi C_3 + 2C_4 \right) + \eta^2 E^2 \cdot \frac{2}{3} \chi^2 \left( C_1 + C_2 \right) + 4\eta^3 E^3 \cdot \mu C_3^2
\end{align*}
This concludes the proof of \cref{thm:FOSP_local1}.
\end{proof}

\subsection{Proof of \cref{cor:tail}}
\label{app:cor}
\begin{proof}
According to \cref{lem:low_rank_error}, the error between the aggregated client update $\sum_{i \in [N]} p_i \TupleB_i^{(t+1)} \TupleA_i^{(t+1)}$ and its low-rank approximation can be bounded as:
\begin{align*}
    \mathbb{E}_t \| \mathcal{E}_{\text{sketch}}^{(t+1)} \|_F^2 & \leq (1+\frac{r}{k-r-1}) \cdot \min_{\varrho' < r-1} (1+\frac{\varrho'}{r-\varrho'-1}) \cdot \mathbb{E} \left[ \tau_{\varrho'+1}^2(\sum_{i \in [N]} p_i \TupleB_i^{(t+1)} \TupleA_i^{(t+1)}) \right] \\
    & \leq (1+\frac{r}{k-r-1}) \cdot (1+\frac{\varrho}{r-\varrho-1}) \cdot \mathbb{E} \left[ \tau_{\varrho+1}^2(\sum_{i \in [N]} p_i \TupleB_i^{(t+1)} \TupleA_i^{(t+1)}) \right],
\end{align*}
where the second inequality holds by evaluating at the fixed $\varrho$ from \cref{assump:tail}.

Define $\epsilon^2 \triangleq (1+\frac{r}{k-r-1}) \cdot (1+\frac{\varrho}{r-\varrho-1}) \cdot \widebar{\tau}^2$. Averaging over the $T$ communication rounds and invoking \cref{assump:tail}, we obtain
\begin{align*}
    \frac{1}{T} \sum_{t=0}^{T-1} \mathbb{E} \left[ \| \mathcal{E}_{\text{sketch}}^{(t+1)} \|_F^2 \right] \leq (1+\frac{r}{k-r-1}) \cdot (1+\frac{\varrho}{r-\varrho-1}) \cdot \widebar{\tau}^2 = \epsilon^2.
\end{align*}
    
Thus, the sketching floor is
\begin{align*}
    & \frac{1}{T} \sum_{t=0}^{T-1} \mathbb{E}\left[S_1^{(t+1)}\right] = \chi \cdot \frac{1}{T} \sum_{t=0}^{T-1} \sqrt{\mathbb{E}\left[ \left\| \mathcal{E}_{\text{sketch}}^{(t+1)} \right\|_F^2 \right] } + 2\mu \cdot \frac{1}{T} \sum_{t=0}^{T-1} \mathbb{E} \left[ \| \mathcal{E}_{\text{sketch}}^{(t+1)} \|_F^2 \right] \\
    & \leq \chi \cdot \sqrt{\frac{1}{T} \sum_{t=0}^{T-1} \mathbb{E}\left[ \left\| \mathcal{E}_{\text{sketch}}^{(t+1)} \right\|_F^2 \right] } + 2\mu \cdot \frac{1}{T} \sum_{t=0}^{T-1} \mathbb{E} \left[ \| \mathcal{E}_{\text{sketch}}^{(t+1)} \|_F^2 \right] \\
    & = \chi \epsilon + 2\mu \epsilon^2,
\end{align*}
where the inequality holds due to Jensen's inequality.

By substituting this bound into the result of \cref{thm:FOSP_local1}, we obtain 
\begin{align*}
    & \frac{1}{T} \sum_{t=0}^{T-1} \left( \mathbb{E} \left[ \left\| \nabla_{\TupleA} \Loss^{\text{lora}} (\TupleB^{(t)}, \TupleA^{(t)}) \right\|_F^2 \right] + \mathbb{E} \left[ \left\| \nabla_{\TupleB} \Loss^{\text{lora}} (\TupleB^{(t)}, \TupleA^{(t)}) \right\|_F^2 \right] \right) \\
    & \leq \frac{1}{\eta E} \cdot \frac{2\Delta_0}{T} + \frac{2}{\eta E} (\chi \epsilon + 2\mu \epsilon^2) + \eta E \cdot 2\left( \chi C_3 + 2C_4 \right) + \eta^2 E^2 \cdot \frac{2}{3} \chi^2 \left( C_1 + C_2 \right) + 4\eta^3 E^3 \cdot \mu C_3^2
\end{align*}

Define $X=\frac{2\Delta_0}{T} + 2(\chi \epsilon + 2\mu \epsilon^2)$ and $Y=2\left( \chi C_3 + 2C_4 \right)$, the function $f(\eta E)= \frac{1}{\eta E}\cdot X+ \eta E \cdot Y$ is minimized at $(\eta E)^\star = \sqrt{\frac{X}{Y}}$, and $f((\eta E)^\star) = 2\sqrt{X Y}$. Thus, by choosing 
$$\eta = \frac{1}{E} \sqrt{\frac{X}{Y}}= \frac{1}{E} \sqrt{\frac{\frac{\Delta_0}{T} + \chi \epsilon + 2\mu \epsilon^2}{\chi C_3 + 2C_4}},$$ we have
\begin{align*}
    & \frac{1}{T} \sum_{t=0}^{T-1} \left( \mathbb{E} \left[ \left\| \nabla_{\TupleA} \Loss^{\text{lora}} (\TupleB^{(t)}, \TupleA^{(t)}) \right\|_F^2 \right] + \mathbb{E} \left[ \left\| \nabla_{\TupleB} \Loss^{\text{lora}} (\TupleB^{(t)}, \TupleA^{(t)}) \right\|_F^2 \right] \right) \\
    & \leq 2\sqrt{X Y} + \frac{X}{Y} \cdot \frac{2}{3} \chi^2 \left( C_1 + C_2 \right) + 4 \left(\frac{X}{Y}\right)^{\frac{3}{2}} \cdot \mu C_3^2 \\
    & = 4 \sqrt{\!\left(\frac{\Delta_0}{T} \!+ \!\chi \epsilon \!+\! 2\mu \epsilon^2\right)\!\left( \chi C_3 \!+\! 2C_4 \right)} + \frac{\frac{\Delta_0}{T} \!+ \!\chi \epsilon + 2\mu \epsilon^2}{ \chi C_3 + 2C_4} \!\cdot\! \frac{2\chi^2 \!\left( C_1 \!+\! C_2 \right)\!}{3} \!+\! 4 \!\left(\!\frac{\frac{\Delta_0}{T} \!+\! \chi \epsilon \!+\! 2\mu \epsilon^2}{\chi C_3 + 2C_4 }\!\right)^{\frac{3}{2}} \!\!\!\!\! \cdot \mu C_3^2.
\end{align*}
\end{proof}

\end{document}